\journal{Journal of \LaTeX\ Templates}
\def\ps@headings{%
	\def\@oddhead{\mbox{}\scriptsize\rightmark \hfil \thepage}%
	\def\@evenhead{\scriptsize\thepage \hfil \leftmark\mbox{}}%
	\def\@oddfoot{}%
	\def\@evenfoot{}}
\makeatother \pagestyle{headings}
\newtheorem{definition}{Definition}
\newtheorem{proposition}{Proposition}
\newtheorem{proof}{Proof}
\newcommand{\warn}[1]{}
\begin{document}

\title{\textit{FACM}: Intermediate Layer Still Retain Effective Features against Adversarial Examples}

\author[mymainaddress]{Xiangyuan Yang}\ead{ouyang\_xy@stu.xjtu.edu.cn}

\author[mymainaddress]{Jie Lin\corref{mycorrespondingauthor}}
\cortext[mycorrespondingauthor]{Corresponding author}\ead{jielin@mail.xjtu.edu.cn}

%% or include affiliations in footnotes:
\author[mysecondaryaddress]{Hanlin Zhang}\ead{ hanlin@qdu.edu.cn}
%\ead[url]{www.elsevier.com}
\author[mymainaddress]{Xinyu Yang}\ead{yxyphd@mail.xjtu.edu.cn}

\author[mymainaddress]{Peng Zhao}\ead{p.zhao@mail.xjtu.edu.cn}

\address[mymainaddress]{School of Computer Science and Technology, Xi'an Jiaotong University, Xi'an, China}
\address[mysecondaryaddress]{Qingdao University, Qingdao, China}

\begin{abstract}
In strong adversarial attacks against deep neural networks (DNN), the generated adversarial example will mislead the DNN-implemented classifier by destroying the output features of the last layer. To enhance the robustness of the classifier, in our paper, a \textbf{F}eature \textbf{A}nalysis and \textbf{C}onditional \textbf{M}atching prediction distribution (FACM) model is proposed to utilize the features of intermediate layers to correct the classification. Specifically, we first prove that the intermediate layers of the classifier can still retain effective features for the original category, which is defined as the correction property in our paper. According to this, we propose the FACM model consisting of \textbf{F}eature \textbf{A}nalysis (FA) correction module, \textbf{C}onditional \textbf{M}atching \textbf{P}rediction \textbf{D}istribution (CMPD) correction module and decision module. The FA correction module is the fully connected layers constructed with the output of the intermediate layers as the input to correct the classification of the classifier. The CMPD correction module is a conditional auto-encoder, which can not only use the output of intermediate layers as the condition to accelerate convergence but also mitigate the negative effect of adversarial example training with the Kullback-Leibler loss to match prediction distribution. Through the empirically verified diversity property, the correction modules can be implemented synergistically to reduce the adversarial subspace. Hence, the decision module is proposed to integrate the correction modules to enhance the DNN classifier's robustness. Specially, our model can be achieved by fine-tuning and can be combined with other model-specific defenses. Finally, the extended experiments demonstrate the effectiveness of our FACM model against adversarial attacks, especially optimization-based white-box attacks, and query-based black-box attacks, in comparison with existing models and methods.

\end{abstract}

% Use if graphical abstract is present
% \begin{graphicalabstract}
% \includegraphics{figs/grabs.pdf}
% \end{graphicalabstract}

% Keywords
% Each keyword is seperated by \sep
\begin{keyword}
Adversarial samples\sep Correction model\sep Correction property\sep Diversity property\sep Deep neural network
\end{keyword}

%\end{frontmatter}

\maketitle

\section{Introduction}
\label{sec:intro}
In Deep neural networks (DNN), adversarial examples that are perturbed imperceptibly from original samples may lead to the misclassification of DNN-implemented classifiers, which may result in serious threats to the DNN applications, such as self-driving cars, finance, and medical diagnosis, etc. Hence, improving the robustness of the DNN-implemented classifier is necessary to eliminate such serious threats.

At present, considerable efforts had been developed to improve the robustness of the DNN-implemented classifier against adversarial examples, which can be categorized as adversarial training (AT)~\cite{PGD,ATHE,Fast-AT,FAT,ATTA,YOPO,FRL,TRADES}, randomization~\cite{SAP,Mitigating-adversarial-effects,RS,SLQ,Adversarial-noise-layer,Random-self-ensemble,Randomized-diversification,SORS} and input purification~\cite{PixelDefend,Defense-GAN}. To be specific, in adversarial training~\cite{PGD}, the clean examples and their corresponding adversarial examples are used synergistically to train the DNN to enhance the robustness. Several methods have been further developed to accelerate adversarial training~\cite{Fast-AT,YOPO} and mitigate low efficiency~\cite{ATHE,FAT,ATTA}, low generalization~\cite{TRADES} and unfairness~\cite{FRL}. In randomization, the random operations, e.g., random imaging resizing and padding~\cite{RS} and randomized diversification~\cite{Randomized-diversification}, etc., are used to disturb the process of the adversarial example generation. In input purification, the extra generative model, e.g., PixelCNN~\cite{PixelCNN} and GAN~\cite{GAN}, is used to purify the input of the classifier, thereby enhancing the robustness. However, these methods still achieve low robustness to adversarial examples perturbed by less disturbance in optimization-based white-box attacks or generated without the architecture and parameters of the victim model in query-based black-box attacks. Hence, this call for a method that can improve the robustness of the DNN-implemented classifier against the adversarial examples generated by both optimization-based white-box attacks and query-based black-box attacks.

%In the optimization-based attacks, the existing study~\cite{Random-forest} have explored that the negative effect of the added imperceptible noise will be amplified layer by layer till the output layer of the classifier makes a wrong classification. Inspiration by this, our studies found the correction property that the middle layers of the classifier may still retain effective features for the real category.

To address this issue, this paper propose a novel Feature Analysis and Conditional Matching prediction distribution (FACM) model to improve the robustness of the DNN-implemented classifier against both optimization-based white-box attacks and query-based black-box attacks. First, we find that the intermediate layers of the classifier may still retain effective features for the original category, which is defined as \textit{Correction Property} in our model. Then, based on the \textit{Correction Property}, the Feature Analysis (FA) correction module, the Conditional Matching Prediction Distribution (CMPD) correction module, and the decision module are proposed to compose the FACM model. The FA correction module is built by the retrained features in the intermediate layers to correct the classification of the perturbed input of the classifier. The CMPD correction module is a conditional auto-encoder, which can achieve fast convergence by adding the features of intermediate layers as the condition, thereby mitigating the negative effect of adversarial examples effectively. In addition, we empirically find that the diversity exists in the correction modules and can be enhanced as the attack strength increases, which is defined as the \textit{Diversity Property} in our model. According to the \textit{Diversity Property}, the decision module is proposed to integrate all correction modules in our FACM model to further enhance the robustness of the DNN-implemented classifier against both optimization-based white-box attacks and query-based black-box attacks by reducing the adversarial subspace.

%According to the above description, our FACM model uses the correction and diversity properties for improving the robustness of the classifier. Due to the correction property not being used in the previous advanced defenses, our method can be built on the classifiers that have used other advanced defense methods. The extended experiments show that our method can improve the robustness of the classifier against adversarial examples. Besides, when combined with the previously advanced defense methods, the defensive capability against optimization-based white-box attacks and query-based black-box attacks has been significantly improved.

Our main contributions are summarized as:
\begin{itemize}
    \item The Feature Analysis and Conditional Matching prediction distribution (FACM) model is proposed to improve the robustness of the DNN-implemented classifier against adversarial examples, especially generated by optimization-based white-box attacks and query-based black-box attacks. In addition, the FACM model can also be embedded into the classifiers with advanced defense methods.
   
    \item The \textit{Correction Property} that is the output of the intermediate layers in the classifier can still retain effective features for the original category against adversarial examples is proposed and analyzed. Based on the \textit{Correction Property}, the FA and CMPD correction modules are proposed to be collaboratively implemented in our FACM model to enhance the robustness of the DNN-implemented classifier.
    
    \item The \textit{Diversity Property} that is the diversity is existed in the multiple FA and CMPD correction modules against adversarial examples and can be enhanced as the attack strength increases are empirically proved in our FACM model. Based on the \textit{Diversity Property}, the decision module is proposed to integrate the correction modules to further enhance the robustness of the DNN-implemented classifier.
    
     \item The Extensive experiments demonstrate that our FACM model can significantly improve the robustness of the classifier against adversarial examples, especially generated by optimization-based white-box attacks and query-based black-box attacks. The results also show that our FACM model can be compatible with existing advanced defense methods.
\end{itemize}

\begin{table}
	\centering \caption{Notation}\label{Notation}
	\begin{tabular}{lp{0.7\textwidth}} \hline
		$x:$& The original clean example.\\
		$y:$& The corresponding ground truth label of $x$.\\
		$\mathcal{D}:$& The distribution of the training dataset.\\
		$f:$& The DNN-implemented classifier.\\ 
		$\mathcal{H}:$& The hypothesis space of the classifier $f$.\\
		$L(\cdot, \cdot):$&  The loss function (e.g., the cross-entropy loss).\\
		$\hat{x}=x+\delta:$& The perturbed image with added perturbation $\delta$.\\
		$\varPhi(\cdot),\varPsi(\cdot):$& The encoder and decoder of autoencoder.\\
		$\theta^{\varPhi},\theta^{\varPsi}:$& The parameters of $\varPhi(\cdot)$ and $\varPsi(\cdot)$.\\
		$l_i(\cdot):$& The output of the $i^{th}$ intermediate layer of the classifier $f$.\\
		$f_i(\cdot):$& The $i^{th}$ auxiliary classifier with $l_i(x)$ as the input.\\
		%$(f+f_i)(\cdot):$& The $i^{th}$ FA correction model which constitutes by the ensemble of a auxiliary classifier $f_i$ and the deployed classifier $f$.\\
		$\varphi_i(\cdot):$& The $i^{th}$ FA correction module.\\
		$n-1:$& The number of intermediate layers in the classifier $f$.\\
		%$g_i(\cdot):$& The $i^{th}$ conditional autoencoder that is conditioned in the outputs of the top $i$ auxiliary classifiers.\\
		$g_i(\cdot):$& The $i^{th}$ conditional autoencoder.\\
		%$f\circ g_i(\cdot):$& The $i^{th}$ CMPD correction model that is constructed by the composite function of the conditional autoencoder $g_i$ and the deployed classifier $f$.\\
		$f\circ g_i(\cdot):$& The $i^{th}$ CMPD correction module.\\
		$h(\cdot):$& The decision module.\\
		$\tau:$& The number of selected correction modules to predict the input.\\
		$\mathbb{C}:$& The correction module set, including the classifier $f$.\\
		$c(\cdot):$& A correction module in $\mathbb{C}$.\\
		$F(x):$& The output of the FACM model to the input $x$.\\
		$S_i:$& The classification sequence of the top $i$ auxiliary classifiers.\\
		$\mathcal{V}:$& The input space of the classifier $f$.\\
		$\mathcal{V}_{S_i}:$& The subspace with the classification sequence $S_i$ in the space $\mathcal{V}$.\\
		$\xi_i(\cdot):$& The outputs concatenation of the top $i$ auxiliary classifiers.\\
		\hline
	\end{tabular}
\end{table}

The rest of the paper is organized as follows: several advanced defense methods are briefly introduced in Section~\ref{sec:preliminaries}. Our proposed FACM model is described in Section~\ref{sec:proposed_approach}. The experiments are conducted in Section~\ref{sec:experiments}. We review the related works in Section~\ref{sec:related_work} and conclude this paper in Section~\ref{sec:conclusion}.

\section{Preliminaries}
\label{sec:preliminaries}
In this section, the standard adversarial training, TRADES, and matching prediction distribution, are mentioned briefly. All notations are defined in Table~\ref{Notation}.

\textbf{Standard Adversarial Training}: In adversarial training, the adversarial attacks is used to generate adversarial examples to achieve data augmentation, and the DNN model is trained by both the orignal examples and adversarial examples to improve the robustness. For instance, the PGD method~\cite{PGD} firstly formulate the adversarial training as a min-max optimization problem, which can be represented as%
\begin{align}\label{equation:adversarial_training}
\underset{f\in \mathcal{H}}{\min}\mathbb{E} _{\left( x,y \right) \sim \mathcal{D}}\left[ \underset{\delta \in \mathcal{B}}{\max}L\left( f\left( x+\delta \right) ,y \right) \right]
\end{align}
where $\mathcal{H}$ is the hypothesis space of the classifier $f$, $\mathcal{D}$ is the distribution of the training dataset, $L$ is a loss function, and $\mathcal{B}$ is the available perturbation space that is usually represented as a $l_\infty$ norm ball around $x$. The basic ideal of adversarial training is to find a perturbed image $x+\delta$ based on a given original image $x$ to maximize the loss with respect to correct classification. %Then, the model is trained by generated adversarial examples.

\textbf{TRADES}: To trade off natural and robust errors, TRASES method~\cite{TRADES} is proposed to train the DNN model by both natural and adversarial examples and change the min-max formulation as:
\begin{align}
\label{equation:trades}
\underset{f\in \mathcal{H}}{\min}\mathbb{E} _{\left( x,y \right) \sim \mathcal{D}}\left[ \begin{array}{c}
L\left( f\left( x \right) ,y \right)\\
+\beta \cdot \underset{\delta \in \mathcal{B}}{\max}L_{KL}\left( f\left( x+\delta \right) ,f\left( x \right) \right)\\
\end{array} \right]
\end{align}
where $L_{KL}$ is the Kullback-Leibler loss, $\beta$ is a regularization parameter that controls the trade-off between standard accuracy and robustness. As shown in Eq.~\ref{equation:trades}, when $\beta$ increases, standard accuracy will decrease while robustness will increase, and vice versa.

\textbf{Matching Prediction Distribution}: To correct the output of the DNN against adversarial examples, MPD~ \cite{MPD} is proposed to conduct a novel adversarial correction model, which involves an autoencoder trained by a custom loss function that is generated by the Kullback-Leibler divergence between the classifier predictions on the original and reconstructed instances:
\begin{align}
\underset{\theta ^{\varPhi},\theta ^{\varPsi}}{\min}\mathbb{E} _{\left( x,y \right) \sim \mathcal{D}}\left[ L_{KL}\left( f\left( x \right) ,f\left( \varPsi \left( \varPhi \left( x \right) \right) \right) \right) \right]
\end{align}
where $\varPhi(\cdot)$ and $\varPsi(\cdot)$ are the encoder and decoder of autoencoder, respectively, $\theta ^{\varPhi}$ and $\theta ^{\varPsi}$ are the parameters of $\varPhi(\cdot)$ and $\varPsi(\cdot)$ that need to be optimized. This method is unsupervised, easy to train and does not require any knowledge about the underlying attack.

\begin{figure*}[ht]
\centering
\centerline{\includegraphics[width=\textwidth]{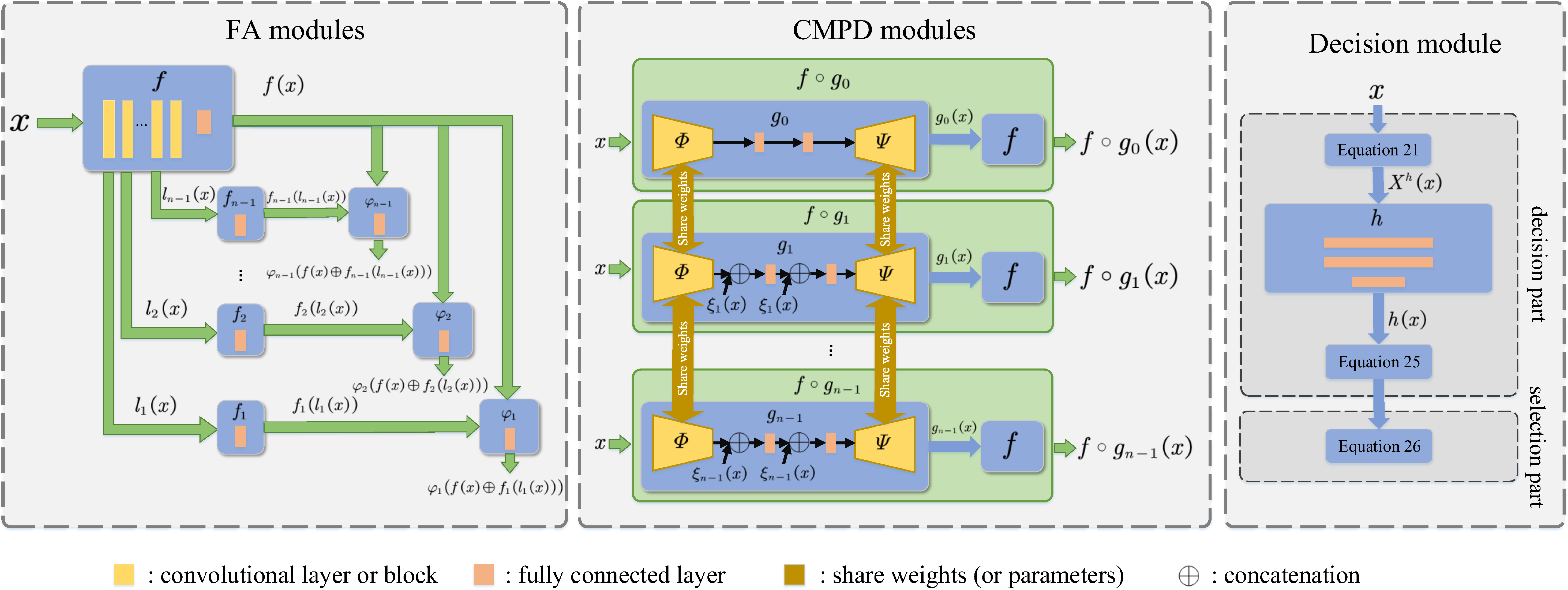}}
%\vskip -0.1in
\caption{A framework of the FACM model.}
\label{framework_of_FACM}
\end{figure*}

\section{Our Approach}
\label{sec:proposed_approach}
In this section, our \textbf{F}eature \textbf{A}nalysis and \textbf{C}onditional \textbf{M}atching prediction distribution (FACM) model is presented, which consists of three modules: the feature analysis (FA) correction module, the conditional matching prediction distribution (CMPD) correction module, and the decision module, as shown in Fig.~\ref{framework_of_FACM}. Firstly, the construction of our FACM model is presented. Then, each module is described in detail. Finally, we describe the operation process of the FACM model.

\subsection{The Construction of the FACM Model}
\label{sec:FACM}

Firstly, the FA correction module denoted as $\varphi_i(\cdot)$ where $1\leq i\leq n-1$, is constructed. In the FA correction module, an auxiliary classifier $f_i$ is embedded into the DNN-implemented classifier to correct the classification by utilizing the features extracted by the intermediate layer of the DNN classifier.

Secondly, the CMPD correction module denoted as $f\circ g_i(\cdot)$ where $0\leq i\leq n-1$, is constructed. To mitigate the negative effect of the perturbation, a conditional autoencoder, which is conditioned with the outputs of the top $i$ auxiliary classifiers, is built to reconstruct the perturbed input. The loss function of the conditional autoencoder is the Kullback-Leibler divergence between the predictions of the DNN classifier on the original and reconstructed inputs. Finally, the conditional autoencoder is embedded into the DNN classifier to construct the CMPD correction module.

Lastly, the decision module includes the decision component and the selection component. The decision component, denoted as $h(\cdot)$, is constructed with the concatenation of outputs of all auxiliary classifiers as the input and determines the weights of FA and CMPD correction modules in the FACM model. According to the determined weights, the selection component randomly chooses $\tau$ number of correction modules to correct the classification of the DNN classifier by averaging the output of these selected FA and CMPD correction modules.

\subsection{The Feature Analysis (FA) correction module}
\label{sec:FA}
In our model, the feature analysis (FA) correction model uses the features retained by the intermediate layer to correct the outputs of the DNN classifier, thereby improving the robustness. Specifically, the auxiliary classifier is embedded into the DNN classifier to constitute the FA correction module. 

The input of the auxiliary classifier is the output of the intermediate layer of the DNN. Different intermediate layers will build different auxiliary classifiers. Hence, the $i^{th}$ auxiliary classifier is defined in Definition~\ref{AC}.

\begin{definition}[Auxiliary Classifier]\label{AC}
For an instance $(x,y)\sim \mathcal{D}$, the $i^{th}$ auxiliary classifier, denoted as $f_i(\cdot)$, is built by taking the output of the $i^{th}$ intermediate layer of the DNN as the input, which can be fine-tuned by:
\begin{align}
\label{equation:objective_function_of_auxiliary_classifier}
\underset{\theta ^{f_i}}{\min}\mathbb{E} _{\left( x,y \right) \sim \mathcal{D}}\left[ L_{CE}\left( f_i\left( l_i\left( x \right) \right) ,y \right) \right]
\end{align}
where $L_{CE}(\cdot, \cdot)$ is the cross-entropy loss, $\theta^{f_i}$ represents the parameters of the $i^{th}$ auxiliary classifier $f_i$, and $l_i\left( x \right)$ is the $i^{th}$ intermediate layer of the DNN.
\end{definition}

Proposition~\ref{prop:effectiveness_of_middle_layer} demonstrates that the proposed auxiliary classifier can correctly predict adversarial examples that mislead the DNN-implemented classifier. To prove the Proposition~\ref{prop:effectiveness_of_middle_layer}, the classification sequence and corresponding space are defined in Definitions~\ref{def:classification_sequence} and \ref{def:classification_sequence_subspace} respectively and can be used to  analyze the effectiveness of intermediate layer's features against adversarial examples.

\begin{definition}[Classification sequence]
	\label{def:classification_sequence}
	For an instance $(x,y)\sim \mathcal{D}$, the classification sequence $S_i$ of the top $i$ auxiliary classifiers can be denoted as:
	\begin{gather}
	S_i = (argmax(f_1(l_1(x))),\nonumber\\
	argmax(f_2(l_2(x))), \cdots, argmax(f_i(l_i(x)))).
	\end{gather}
\end{definition}

\begin{definition}[Classification sequence space]
	\label{def:classification_sequence_subspace}
	In the input space $\mathcal{V}$ of the classifier, the classification sequence space $\mathcal{V}_{S_i}$ indicates
	\begin{gather}
	(argmax(f_1(l_1(x))), argmax(f_2(l_2(x))), \nonumber\\
	\cdots, argmax(f_i(l_i(x))))=S_i,\quad if \quad x\in \mathcal{V}_{S_i}.\label{equation:classification_sequence_space}\\
	\mathcal{V}_{S_i}\subset \mathcal{V}. \label{equation:classification_sequence_subspace}
	\end{gather}
\end{definition}

\begin{figure*}[ht]
\centering
\centerline{\includegraphics[width=\textwidth]{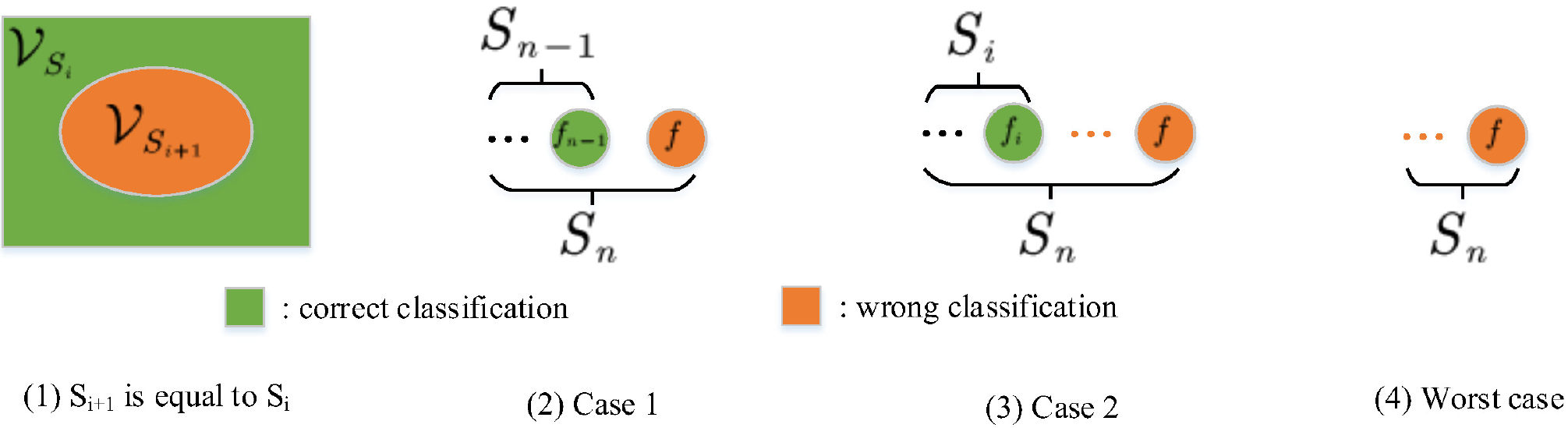}}
%\vskip -0.1in
\caption{The working principle of the FA correction modules.}
\label{FA_analysis}
\end{figure*}

\begin{proposition}
	\label{prop:effectiveness_of_middle_layer}
	The impact of adversarial examples on the intermediate layer is less than that on the last layer.
\end{proposition}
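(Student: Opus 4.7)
The plan is to make the informal notion of ``impact'' precise, and then argue that adversarial perturbations crafted against the final classifier $f$ typically fail to cross the decision boundaries of the intermediate auxiliary classifiers $f_i$, so the effective features at intermediate layers are preserved. Concretely, I would quantify impact by the probability (over $(x,y)\sim \mathcal{D}$ and over attack-generated perturbations $\delta \in \mathcal{B}$) that $\arg\max f_i(l_i(x+\delta)) \neq y$. Proposition~\ref{prop:effectiveness_of_middle_layer} then becomes the claim that this probability for $i<n$ is strictly smaller than for the last layer $i=n$.

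First, I would leverage the classification sequence space from Definition~\ref{def:classification_sequence_subspace}. For a clean input $x$ with true label $y$, let $S_{n-1}(x)$ be the sequence produced by the top $n-1$ auxiliary classifiers; then $x$ lies in $\mathcal{V}_{S_{n-1}(x)}$, and since each $f_i$ is trained by Eq.~\ref{equation:objective_function_of_auxiliary_classifier} to predict $y$, with high probability every entry of $S_{n-1}(x)$ equals $y$. Next I would observe that a standard adversarial attack solves the inner maximization of Eq.~\ref{equation:adversarial_training} against $f$ alone, so the gradient signal used to produce $\delta$ is driven only by the loss of the last layer. Consequently the set of perturbations that crosses the last-layer decision boundary is generally larger than, and not aligned with, the intersection of boundary-crossing sets of the intermediate classifiers.

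The key step is then a subspace comparison. Define, for each layer $i$, the ``flip region'' $A_i(x)=\{\delta\in\mathcal{B}:\arg\max f_i(l_i(x+\delta))\neq y\}$. I would show that $A_n(x)$ is what the attacker optimizes, whereas $A_i(x)$ for $i<n$ is determined by a different decision surface in a lower-dimensional feature space $l_i(\mathcal{V})$. Using the classification-sequence-space partition $\mathcal{V}=\bigsqcup_{S_{n-1}}\mathcal{V}_{S_{n-1}}$, I would argue that the adversarial example $x+\delta$ remains inside $\mathcal{V}_{S_{n-1}(x)}$ with high probability, because crossing into a different sequence space requires simultaneously flipping at least one intermediate prediction, a condition the attacker did not optimize for. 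This gives $\Pr[\delta\in A_i(x)\mid \delta\in A_n(x)]<1$, which directly yields the desired inequality between intermediate-layer impact and last-layer impact.

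The hard part will be making the probabilistic separation between $A_n(x)$ and the $A_i(x)$ truly rigorous, since in the worst case a sufficiently strong attack could flip all layers simultaneously. I expect to handle this by restricting to perturbations generated by gradient-based attacks on $f$ (so the gradient information with respect to $f_i$ is not explicitly exploited) and by invoking the fact that distinct layers induce substantively different partitions of $\mathcal{V}$, so that the measure of $A_n(x)\cap A_i(x)$ in $\mathcal{B}$ is strictly smaller than the measure of $A_n(x)$. An auxiliary empirical verification (via Fig.~\ref{FA_analysis}) would then corroborate the proposition without requiring stronger geometric assumptions on the layer maps $l_i$.
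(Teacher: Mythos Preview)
Your proposal is broadly sound and lands on the same heuristic conclusion as the paper, but the route differs. The paper's argument is purely geometric and more direct: it first observes the nesting relation $\mathcal{V}_{S_{i+1}}\subset\mathcal{V}_{S_i}$ whenever $S_{i+1}[1\cdots i]=S_i$ (Eq.~\ref{equation:relationship_of_classification_sequence_space}), so that the chain $\mathcal{V}_{S_n}\subset\mathcal{V}_{S_{n-1}}\subset\cdots\subset\mathcal{V}_{S_1}$ holds. An adversarial example $\hat{x}$ that fools $f$ is simply one that leaves the innermost cell $\mathcal{V}_{S_n}$; the nesting then immediately says $\hat{x}$ may still sit in some outer $\mathcal{V}_{S_i}$, and if $S_i[i]=y$ the auxiliary classifier $f_i$ predicts correctly. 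The paper then appeals to the smallness of $\delta$ (and Fig.~\ref{FA_analysis}) to argue this actually occurs. Your approach instead formalizes ``impact'' probabilistically, introduces flip regions $A_i(x)$, and leans on the fact that the attack optimizes only $A_n$; the nesting relation never appears explicitly. What your framing buys is a precise statement of what would need to be proved ($\Pr[\delta\in A_i\mid\delta\in A_n]<1$) and an honest acknowledgment that the strict inequality cannot be established without further assumptions. What the paper's framing buys is economy: once the nesting is stated, the ``may still be correct'' claim is immediate and the only remaining gap is the same one you identify, namely why small $\delta$ should keep $\hat{x}$ in an outer cell. Neither argument is fully rigorous; both ultimately defer to empirical verification.
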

\begin{proof}
	As shown in Fig.~\ref{FA_analysis}-(1), when the classification sequence of the top $i$ auxiliary classifiers is equal, i.e. $S_{i+1}[1\cdots i]=S_i$, the classification sequence space $\mathcal{V}_{S_{i+1}}$ is constrained in $\mathcal{V}_{S_i}$, i.e.,
	\begin{align}
	\label{equation:relationship_of_classification_sequence_space}
	\mathcal{V}_{S_{i+1}}\subset \mathcal{V}_{S_i},\quad if\quad S_{i+1}[1\cdots i]=S_i.
	\end{align}
	For an instance $(x,y)\sim\mathcal{D}$, its perturbation version $\hat{x}$ (i.e., an adversarial example) mislead the classifier $f$, i.e.,
	\begin{align}
	\hat{x}\notin \mathcal{V}_{S_n},\quad if\quad S_n[n]=y.
	\end{align}
	%
	% where the classification sequence of $x$ is $S_n$.
	
	According to Eq.~\ref{equation:relationship_of_classification_sequence_space}, because the space $\mathcal{V}_{S_i}$ ($i<n$) includes $\mathcal{V}_{S_n}$ when $S_n[1\cdots i]=S_i$, the adversarial example $\hat{x}$ may be located in the space $\mathcal{V}_{S_i}$. Therefore, if $S[i]=y$, the auxiliary classifier $f_i$ can correctly predict the adversarial example $\hat{x}$. The following two cases will discuss that the auxiliary classifier $f_i$ ($i<n$) can correctly predict the adversarial example $\hat{x}$:
	\begin{itemize}
		\item Case 1: as shown in Fig.~\ref{FA_analysis}-(2), the auxiliary classifier $f_{n-1}$ can correctly predict the adversarial example $\hat{x}$ when
		\begin{gather}
		\hat{x}\in \mathcal{V}_{S_{n-1}}\land S_{n-1}[n-1]=y.
		\end{gather}
		\item Case 2: Similarly, as shown in Fig.~\ref{FA_analysis}-(3), the auxiliary classifier $f_i$ ($i<n-1$) can correctly predict the adversarial example $\hat{x}$.
	\end{itemize}
	These two cases occur because the disturbance against the original clean example is less and as close to the original example as possible. Hence, the generated adversarial example only misleads the DNN classifier but is still correctly predicted by the auxiliary classifier. Therefore, the adversarial example mainly makes the last layer output the wrong decision, but the intermediate layers still extract effective features for the original category. \qed
\end{proof}

The Proposition~\ref{prop:effectiveness_of_middle_layer} verifies that the intermediate layer retains effective features for the original category against adversarial examples. Hence, embedding auxiliary classifiers into the DNN classifier to conduct the FA correction module, which is defined in Definition~\ref{FAmodel}, can effectively correct the classification.

\begin{definition}[FA correction module]\label{FAmodel}
	The $i^{th}$ FA correction module, denoted as $\varphi_i(\cdot)$, can be constituted by the emsemble of the classifier $f$ and the $i^{th}$ auxiliary classifier $f_i$, which can be fine-tuned by
	\begin{align}
	\label{equation:objective_function_of_FA_correction_model}
	\underset{\theta ^{\varphi_i}}{\min}\mathbb{E} _{\left( x,y \right) \sim \mathcal{D}}\left[ L_{CE}\left( \varphi_i\left( f(x)\oplus f_i\left(l_i\left( x \right)\right) \right) ,y \right) \right]
	\end{align}
	where $\oplus$ is the concatenation operation.
\end{definition}

Obviously, multiple FA correction modules are formed by utilizing different intermediate layers of the DNN. Besides, these modules can be collaboratively implemented in our FACM model to further correct the classification of the DNN classifier. In addition, these correction modules will not affect the classification accuracy on clean examples, which is verified in Section~\ref{sec:the_effect_of_FA_correction_modules} in detail.

\subsection{The Conditional Matching Prediction Distribution (CMPD) Correction Module}
\label{sec:CMPD}
To further improve the robustness of the DNN-implemented classifier in the scenarios (i.e., Fig.~\ref{FA_analysis}-(4)) that all auxiliary classifiers cannot correctly predict the adversarial examples, a conditional matching prediction distribution (CMPD) correction module is proposed, which can transform the adversarial sample $\hat{x}$ into $\hat{x}'$ to be correctly classified by the classifier $f$.

The CMPD correction module is the composite function of a conditional autoencoder and the DNN classifier $f$. The input of the conditional autoencoder consists of a clean example $x$ and the concatenation of the outputs of the top $i$ auxiliary classifiers. The output of the conditional autoencoder is the transformed example $x'$. Additionally, applying the outputs of different top $i$ auxiliary classifiers as the conditions will construct different conditional autoencoders, denoted as $g_i$, which is defined in Definition~\ref{def:CMPD}. Multiple CMPD correction modules can also be collaboratively embedded in our FACM model.

\begin{definition}[The $i^{th}$ conditional autoencoder $g_i(\cdot)$]
\label{def:CMPD}
The $i^{th}$ conditional autoencoder $g_i(\cdot)$ emerges an encoder $\varPhi(\cdot)$, a decoder $\varPsi(\cdot)$ and the outputs of the top $i$ auxiliary classifiers as the conditions, which can be represented as
\begin{gather}
g_i(x)=\varPsi(\varPhi(x| \xi_i(x))|\xi_i(x)) \label{equation:conditional_autoencoder}\\
\xi_i(x) = f_1(l_1(x))\oplus f_2(l_2(x))\oplus\cdots\oplus f_i(l_i(x)) \label{equation:conditions}
\end{gather}
where $\oplus$ denotes the concatenation operator. The $i^{th}$ conditional autoencoder $g_i(\cdot)$ is fine-tuned by
\begin{align}
\label{equation:objection_of_CMPD}
\underset{g_i\in \mathcal{H} _{g_i}}{\min}\mathbb{E} _{\left( x,y \right) \sim \mathcal{D}}\left[ L_{KL}\left( f\left( x \right), f\left( g_i\left( x \right) \right) \right) \right]
\end{align}
where $\mathcal{H}_{g_i}$ is the hypothesis space of the $i^{th}$ conditional autoencoder $g_i$.
\end{definition}

The $i^{th}$ CMPD correction module is represented as $f\circ g_i(\cdot)$. Additionally, the autoencoder $g_0$ means no output of the auxiliary classifier as the conditional input. In this scenario, the CMPD correction module $f\circ g_0(\cdot)$ will degenerate to the MPD~\cite{MPD} correction model $f\circ g(\cdot)$, where
\begin{align}
g_0(x)=g(x)=\varPsi(\varPhi(x))).
\end{align}
As shown in Fig.~\ref{framework_of_FACM}, all CMPD correction modules in our FACM model share parameters except for fully connected layers with conditional input. 

Proposition~\ref{prop:CMPD} verifies that the training of the conditional autoencoder $g_i$ ($i>0$) can achieve better convergence than that of the autoencoder $g_0$.
\begin{proposition}
\label{prop:CMPD}
With $\xi_i(x)$ as the conditional input, the conditional autoencoder $g_i$ can simplify the complexity of the learning task in comparison with the autoencoder $g_0$.
\end{proposition}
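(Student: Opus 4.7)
The plan is to show that conditioning on $\xi_i(x)$ effectively partitions the input space into the classification sequence subspaces $\mathcal{V}_{S_i}$ from Definition~\ref{def:classification_sequence_subspace}, so that $g_i$ only has to learn a simpler reconstruction map on each piece rather than the single global map that $g_0$ is forced to realize. First I would decompose the training objective using the law of total expectation over classification sequences. Writing $\mathbb{E}_{(x,y)\sim\mathcal{D}}[L_{KL}(f(x),f(g(x)))] = \sum_{S_i}\Pr[x\in\mathcal{V}_{S_i}]\cdot\mathbb{E}_{x\in\mathcal{V}_{S_i}}[L_{KL}(f(x),f(g(x)))]$ exposes the subspace structure. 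For $g_0$, the same encoder/decoder pair $(\varPhi,\varPsi)$ must minimize every inner expectation simultaneously. For $g_i$, the conditional input $\xi_i(x)$ is constant on each $\mathcal{V}_{S_i}$ by Eq.~\ref{equation:classification_sequence_space}, so the fully connected layers that merge $\xi_i(x)$ with the latent code effectively instantiate a distinct sub-autoencoder per subspace while still sharing the convolutional trunk.

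Next I would argue that each sub-task is strictly easier than the global one, using two complementary quantifications. Information-theoretically, $H(X\mid \xi_i(X))\le H(X)$ with gap $I(X;\xi_i(X))>0$ whenever $\xi_i$ is a non-constant deterministic function of $x$, so the conditional reconstruction problem has lower entropy. Representationally, any minimizer $g_0^{\star}$ of Eq.~\ref{equation:objection_of_CMPD} can be realized inside the hypothesis space $\mathcal{H}_{g_i}$ simply by ignoring the conditional input, so $\min_{g_i\in\mathcal{H}_{g_i}} \mathbb{E}[L_{KL}(f(x),f(g_i(x)))] \le \min_{g_0\in\mathcal{H}_{g_0}} \mathbb{E}[L_{KL}(f(x),f(g_0(x)))]$. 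Since $g_i$ enjoys the additional freedom to specialize the decoding on each $\mathcal{V}_{S_i}$, the inequality is generally strict. Combined with the reduced conditional entropy, this means the per-subspace residual error that gradient descent must still minimize is smaller, which is the precise sense in which the learning task is simplified.

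The main obstacle will be pinning down a single rigorous notion of "complexity" that matches the empirical convergence claim. The information-theoretic inequality captures the statistical difficulty of the reconstruction, whereas the hypothesis-space inclusion captures the optimization difficulty; neither alone is fully satisfying. I would lean on the inclusion argument as the backbone of the proof, because it directly dominates $g_0$'s loss by $g_i$'s loss, and then invoke the entropy reduction as a secondary justification for why the excess capacity of $g_i$ is actually exploitable by finite-sample training. A subtle point I would need to handle carefully is that the conditions $\xi_i(x)$ are themselves learned outputs rather than ground-truth labels, so I would restrict the analysis to inputs where the auxiliary classifiers are stable (as supported by Proposition~\ref{prop:effectiveness_of_middle_layer}) so that $\xi_i$ behaves like a deterministic partition almost everywhere on the data distribution $\mathcal{D}$.
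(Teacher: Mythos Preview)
Your proposal takes a genuinely different route from the paper, and the backbone you choose actually points in the wrong direction for the stated claim. The hypothesis-space inclusion $\mathcal{H}_{g_0}\subseteq\mathcal{H}_{g_i}$ (via ignoring the conditional input) establishes that $g_i$ is \emph{more expressive} and can achieve no larger training loss than $g_0$. But in the standard learning-theoretic sense invoked by the proposition---complexity of the learning task, convergence, generalization---a larger hypothesis class is \emph{more} complex, not less. So the loss-domination inequality $\min_{g_i}\le\min_{g_0}$ does not by itself support ``simplified complexity''; if anything it works against you, and you would need the entropy argument to do all the heavy lifting rather than serve as a secondary justification.

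The paper's proof goes the other way: it shrinks the \emph{effective output space} rather than enlarging the function class. Given an instance with classification sequence $S_n$, the unconditional autoencoder $g_0$ must land the reconstruction in the correct one of $m^n$ possible sequences, whereas $g_i$, already knowing $S_i$ from its conditional input, only has to pin down the remaining sub-sequence $S_n[i{+}1\cdots n]$ out of $m^{n-i}$ possibilities. This counting is then turned into growth-function estimates $\varPi_{\mathcal{H}_g}(M)=M^{m^n}$ versus $\varPi_{\mathcal{H}_{g_i}}(M)=M^{m^{n-i}}$, and a uniform-convergence bound from computational learning theory (Theorem~12.2 in the cited reference) is applied to conclude that $g_i$ enjoys a strictly smaller upper bound on $P(|E(g_i)-\hat{E}(g_i)|>\upsilon)$, i.e., faster and more stable convergence.

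Your conditional-entropy observation $H(X\mid\xi_i(X))\le H(X)$ is in the same spirit as the paper's reduction of the output search space, but the paper makes the reduction quantitative at the level of label sequences rather than raw inputs, which is what lets it plug into a growth-function bound. If you want to salvage your outline, drop the inclusion argument as the backbone and instead formalize the entropy reduction at the level of the classifier's discrete output (the sequence $S_n$), which then recovers essentially the paper's $m^n$ vs.\ $m^{n-i}$ counting.
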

\begin{proof}
For an instance $(x,y)\sim \mathcal{D}$, if the classification sequence of $x$ is $S_n$, the autoencoder $g_0$ needs to find the classification sequence $S_n$ from $m^n$ classification sequences to ensure that
\begin{gather}
(argmax(f_1\circ g_0(x)), argmax(f_2\circ g_0(x)), \nonumber\\
\cdots, argmax(f_n\circ g_0(x)))=S_n
\end{gather}
where $m$ is the number of categories in the classifier $f$ and $f_i\circ g_0$ is the composite function of the autoencoder $g_0$ and the $i^{th}$ auxiliary classifier $f_i$:
\begin{align}
\label{equation:composite_function_of_auxiliary_and_autoencoder}
f_i\circ g_0(x)=f_i(l_i(g_0(x))).
\end{align}
While, for the conditional autoencoder $g_i$, due to the known condition $\xi_i(x)$, the classification sequence $S_i$ is known. Hence, it only needs to find the sub-sequence $S_n[i+1\cdots n]$ from $m^{n-i}$ classification sequences to ensure that 
\begin{gather}
(argmax(f_{i+1}\circ g_i(x)), \cdots,\nonumber\\
argmax(f_n\circ g_i(x)))=S_n[i+1\cdots n]
\end{gather}
where $f_{i+1}\circ g_i$ is the composite function of the conditional autoencoder $g_i$ and the $(i+1)^{th}$ auxiliary classifier $f_{i+1}$ as shown in Eq.~\ref{equation:composite_function_of_auxiliary_and_autoencoder} by replacing $g_0$ with $g_i$.

According to the concept of the hypothesis space and growth function in computational learning theory \cite{computational-learning-theory}, the growth functions of the hypothesis space of $g_0$ and $g_i$ can be calculated as Eq.~\ref{equation:growth_function}, where the hypothesis space is the set of all possible mappings and the growth function represents the maximum number of possible results. %\red{that $M$ samples are labeled in the hypothesis space.}
\begin{align}
\label{equation:growth_function}
\left\{ \begin{array}{c}
\varPi _{\mathcal{H} _g}\left( M \right) =M^{m^n}\\
\varPi _{\mathcal{H} _{g_i}}\left( M \right) =M^{m^{n-i}}\\
\end{array} \right.
\end{align}
where $\mathcal{H}_g$ and $\mathcal{H}_{g_i}$ denote hypothesis space of the autoencoder $g$ and the conditional autoencoder $g_i$ respectively, $\varPi _{\mathcal{H} _g}$ and $\varPi _{\mathcal{H} _{g_i}}$ denote the growth function of $\mathcal{H}_g$ and $\mathcal{H}_{g_i}$ respectively, $M$ is the size of training dataset. Due to $\varPi _{\mathcal{H}_g}(M)\gg \varPi _{\mathcal{H}_{g_i}}(M)$, the solution space of the conditional autoencoder $g_i$ is less than that of the autoencoder $g$. Therefore, the conditional autoencoder $g_i$ converges faster than the autoencoder $g$.

According to Theorem 12.2~\cite{computational-learning-theory} that utilizes the growth function to estimate the relationship between the empirical error $E(\cdot)$ and the generalization error $\hat{E}(\cdot)$, for any $M$, $0<\upsilon<1$, $g\in \mathcal{H}_g$ and $g_i\in \mathcal{H}_{g_i}$, we have
\begin{align}
\left\{ \begin{array}{c}
P( |E( g ) -\hat{E}( g ) |>\upsilon ) \leq 4\varPi _{\mathcal{H} _g}( 2M ) \exp ( -\frac{M\upsilon ^2}{8} )\\
P( |E( g_i ) -\hat{E}( g_i ) |>\upsilon ) \leq 4\varPi _{\mathcal{H} _{g_i}}( 2M ) \exp ( -\frac{M\upsilon ^2}{8} )\\
\end{array} \right.
\end{align}
where $P(|E(g)-\hat{E}(g)|>\upsilon)$ and $P(|E(g_i)-\hat{E}(g_i)|>\upsilon)$ respectively denote the probabilities of the autoencoder $g$ and the conditional autoencoder $g_i$ which do not converge to the expectation error $\upsilon$. Due to $4\varPi _{\mathcal{H} _{g_i}}\left( 2M \right) \exp \left( -\frac{M\upsilon ^2}{8} \right) \ll 4\varPi _{\mathcal{H} _g}\left( 2M \right) \exp \left( -\frac{M\upsilon ^2}{8} \right)$, the possible value range of $P(|E(g_i)-\hat{E}(g_i)|>\upsilon)$ are smaller, i.e., $g_i$ has a smaller probability of non-convergence. Therefore, the conditional autoencoder converges more stable than the autoencoder. \qed
\end{proof}

%\red{Note that, if the classification sequence of $x$ is $S_n$, its adversarial example $\hat{x}$ changes the sequence $S_n$ to $\hat{S}_n$. When $\hat{S}_n$ is restored to $S_n$ with the CMPD correction module and satisfy constraint that $\hat{S}_n[1\cdots i]\ne S_n[1\cdots i]$, the CMPD correction module $f\circ g_j$ ($j\geq i$) will achieve poor performance because the condition of $g_j$ is not equal to $S_n[1\cdots j]$, i.e.,}

Note that, the CMPD correction module $f\circ g_j$ will achieve poor performance to restore the classification sequence $\hat{S}_n$ of adversarial example to the sequence $S_n$ of its natural version when the condition of $f\circ g_j$, i.e. $\hat{S}_n[1\cdots j]$, is not equal to $S_n[1\cdots j]$.
%
%\begin{gather}
%\hat{S}_n[1\cdots j]\ne S_n[1\cdots j],\nonumber\\
%if \quad \hat{S}_n[1\cdots i]\ne S_n[1\cdots i]\land  j\geq i.
%\end{gather}
%
\begin{algorithm}[tb]
\caption{Decision model training}
\label{alg:decision_model_training}
\begin{algorithmic}
\Require
PGD adversarial training for $T$ epochs, given some radius set $\mathcal{E}$, adversarial step size $\mathcal{A}$, $N$ PGD steps and a dataset of size $M$.
\Ensure
Decision model $h$ and its parameters $\theta_h$.
\State $\mathcal{D}=\{(x_i,y_i)|i\leq M\}$
\For {$\epsilon, \alpha$ {\bfseries in} $\mathcal{E},\mathcal{A}$}
\For {$i=1$ {\bfseries to} $M$}
\State //Perform PGD adversarial attack
\State $\delta=0$ //or randomly initialized
\For {$j=1$ {\bfseries to} $N$}
\State $\delta=\delta+\alpha\cdot sign(\nabla_\delta\sum_c^{\mathbb{C}} L_{CE}(c(x_i+\delta), y_i))$
\State $\delta=\max(\min(\delta,\epsilon),-\epsilon)$
\EndFor
\State $\mathcal{D}=\mathcal{D}\cup\{(x_i+\delta,y_i)\}$
\EndFor
\EndFor
\For {$t=1$ {\bfseries to} $T$}
\For {$(x,y)$ {\bfseries in} $\mathcal{D}$}
\State // Update model parameters
\State $\theta _h=\theta _h-\nabla_{\theta_h} L_{MFL}(x,y)$
\EndFor
\EndFor
\end{algorithmic}
\end{algorithm}

\subsection{The Decision Module}
\label{sec:decision_model}
According to two cases analyzed in the proof of Proposition~\ref{prop:effectiveness_of_middle_layer} and the above note, different adversarial samples may need different correction modules to correct the outputs of the DNN classifier. Hence, multiple correction modules should be collaboratively implemented in our FACM model. In addition, the diversity of correction modules exists against adversarial examples, and the greater the attack strength, the more significant the diversity, which is defined as the \textit{Diversity Property} in our paper and demonstrated in Section~\ref{sec:the_effect_of_FA_correction_modules} and ~\ref{sec:the_effect_of_CMPD_correction_modules} in details. Due to the smaller adversarial subspace, the ensemble model has robustness against adversarial examples~\cite{SORS}. Hence, the \textit{Diversity Property} allows our correction modules to be integrated collaboratively to defend against adversarial examples.

To this end, a decision model is proposed to integrate the FA and CMPD modules implemented in the FACM model and determine the weights of the correction modules against an adversarial sample, thereby further enhancing the robustness of the DNN classifier against adversarial examples. Note that, different from the resource-consuming and classifier retraining of the model ensemble methods~\cite{Managing-diversity, Ensemble-adversarial-training} and Super-network~\cite{SORS}, our method only needs to fine-tune several fully-connected layers and three convolutional layers constructed encoder and decoder for achieving a diverse correction model set $\mathbb{C}$ with the size of $2n$ as shown in Eq.~\ref{equation:corection_model_set}.

Due to the adversarial examples making the deep neural network take an unusual activation path \cite{NIC}, in our model the outputs of all auxiliary classifiers will be used as the input of the decision module to decide which correction module in the set $\mathbb{C}$ used to predict the adversarial example. To train the decision module $h(\cdot)$ with an instance $(x,y)$, the input and ground truth of the decision module is denoted as $X^h(x)$ and $Y^h(x,y)$, respectively, where $X^h(x)$ indicates the concatenation of outputs of all auxiliary classifiers:
\begin{align}
\label{equation:input_annotation_function_of_h}
X^h(x)=\xi_n(x),
\end{align}
$Y^h(x,y)$ is a 0-1 vector used to indicate whether each correction module in the set $\mathbb{C}$ (i.e., each element in $[f(x), \varphi_1(f(x)\oplus f_1(l_1(x))),\cdots,\varphi_{n-1}(f(x)\oplus f_{n-1}(l_{n-1}(x))),f\circ g_0(x),\cdots,f\circ g_{n-1}(x)]$) is classified correctly or not and can be represented as
\begin{gather}
Y^h(x,y)=[sign(f(x), y), sign(\varphi_1(f(x)\oplus f_1(l_1(x))), y), \nonumber\\
\cdots, sign(\varphi_{n-1}(f(x)\oplus f_{n-1}(l_{n-1}(x))), y), \nonumber\\
sign(f\circ g_0(x), y), \cdots, f\circ g_{n-1}(x)] \label{equation:output_annotation_function_of_h}\\
sign\left( \boldsymbol{y} ,y \right) =\left\{ \begin{array}{c}
1, if\,\,argmax \left( \boldsymbol{y} \right) =y\\
0, if\,\,argmax \left( \boldsymbol{y} \right) \ne y\\
\end{array} \right..\label{equation:sign_function}
\end{gather}

Algorithm~\ref{alg:decision_model_training} indicates the training flow of the decision module. As shown in Figs.~\ref{FIG1} and \ref{FIG2}, because the accuracy of different correction modules is different in predicting the adversarial examples, the quantity marked with 1 in different modules is different. This will lead to the imbalance of training data for the decision module $h$. Hence, the multi-label focal loss is used as the loss function for the decision modules:
\begin{gather}
L_{MFL}(x,y)=\sum_{i}[-Y^h(x,y)_i\cdot( 1-Sigmoid( h( X^h(x) ) ) _i ) ^{\gamma}\nonumber\\
\cdot\log ( Sigmoid( h( X^h(x) ) ) _i )]\label{equation:multilabel_focal_loss}
\end{gather}
where $Sigmoid(\cdot)$ is the sigmoid function, $\gamma$ is an adjustable factor, $Sigmoid\left( h\left( X^h(x) \right) \right) _i$ and $Y^h(x,y)_i$ are the $i^{th}$ element in these vectors.

\subsection{The Operation Process of the FACM Model}
Taking an instance $(x,y)$ as an example, the weight vector $\boldsymbol{\omega}$ of all correction modules in the FACM model is determined by the decision module, which is represented as
\begin{align}
\label{equation:selection-weight-vector}
\boldsymbol{\omega} = \frac{Sigmoid\left( h\left( x \right) \right)}{\left| Sigmoid\left( h\left( x \right) \right) \right|_1}
\end{align}
where $Sigmoid(\cdot)$ is the sigmoid function. 

Then, according to the weight vector, $\boldsymbol{\omega}$, $\tau$ correction modules (both FA and CMPD) are randomly selected from FA and CMPD correction module set $\mathbb{C}$, which also includes the DNN-implemented classifier $f$.

Finally, the average output of $\tau$ selected correction modules is taken as the final prediction of the instance $x$:
\begin{gather}
F\left( x \right) =\frac{\sum\nolimits_{i=1}^{\tau}{c_i\left( x \right)}}{\tau},c_i\sim PN\left( \mathbb{C} :\boldsymbol{\omega} \right) \label{equation:FACM_prediction}\\
\mathbb{C} =\left\{ f, \varphi_1, \varphi_2, \cdots, \varphi_{n-1}, f\circ g_0, f\circ g_1, \cdots, f\circ g_{n-1} \right\} \label{equation:corection_model_set}\\
\left| \mathbb{C}\right|=2n
\end{gather}
where $F(x)$ denotes the output of the FACM model to the input $x$, $c_i$ is a selected correction model in $\mathbb{C}$, $PN(\mathbb{C}:\boldsymbol{\omega})$ represents a multinomial distribution where the correction model set $\mathbb{C}$ satisfies probability vector $\boldsymbol{\omega}$.

In our FACM model, the diversity of the correction modules and the randomness of the decision module can improve the robustness of the DNN-implemented classifier against adversarial attacks, especially optimization-based white-box attacks, and query-based black-box attacks. The adversarial examples generated by the optimization-based white-box attacks are model-specific, which have poor transferability~\cite{Delving-into-transferable}. Hence, our FACM model can correctly predict the adversarial examples generated by these attacks. Additionally, the diversity of the correction modules and the randomness of the decision module can lead to inaccurate gradient estimation of the query-based black-box attacks, thereby reducing the negative effect of the adversarial examples generated by this kind of attack.

\section{Experiments and Results}
\label{sec:experiments}
In this section, the experiment is conducted to validate the effectiveness of the proposed FACM model in enhancing the robustness of the DNN-implemented classifier against optimization-based white-box attacks and query-based black-box attacks. All experiments are run on a single machine with four GeForce RTX 2080tis using Pytorch.

\subsection{Experimental Setting}
\textbf{Datasets and the architecture of the classifier.} Our experiments are conducted on benchmark adversarial learning datasets, including MNIST~\cite{MNIST}, CIFAR10~\cite{CIFAR}, and CIFAR100~\cite{CIFAR} datasets. For the MNIST dataset, the algorithms with the model architecture MNISTNet~\cite{MNISTNet} are evaluated, where MNISTNet includes 4 convolutional layers and 3 fully connected layers. For both CIFAR10 and CIFAR100 datasets, the algorithms with the model architecture WRN-16-4~\cite{WRN} are evaluated, where WRN-16-4 includes 4 basic blocks.

\textbf{The architecture of the FACM model.} The architecture of the auxiliary classifier is a fully connected layer, in which the input is the output of a intermediate layer of the DNN and the output is the logit vector of categories. The architecture of the FA correction module is also a fully connected layer, in which the input is the concatenation of the output of an auxiliary classifier and the DNN classifier, and the output is the logit vector of categories. The architecture of the conditional autoencoder in the CMPD correction module consists of an encoder and a decoder, which are constructed with 3 convolutional layers, across different datasets. The decision module is a three-layer perceptron. Note that the intermediate layer of the DNN can be a single layer or a network block.

\textbf{Baselines.} Seven adversarial training methods are evaluated, i.e., Fast Adversarial Training (Fast-AT)~\cite{Fast-AT}, You Only Propagate Once (YOPO)~\cite{YOPO}, Adversarial Training with Hypersphere Embedding (ATHE)~\cite{ATHE}, Fair Robust Learning (FRL)~\cite{FRL}, Friendly Adversarial Training (FAT)~\cite{FAT}, TRADES~\cite{TRADES} and Adversarial Training with Transferable Adversarial examples (ATTA)~\cite{ATTA}. Two channel-wise activation suppressing methods are selected, i.e., Channel-wise Activation Suppressing (CAS)~\cite{CAS} and Channel-wise Importance-based Feature Selection (CIFS)~\cite{CIFS}. Two randomization methods without retraining the classifier are compared, i.e., resize and padding (RP)~\cite{Mitigating-adversarial-effects} and random smoothing (RS)~\cite{RS}. All baselines use the default setting except for RS with $\sigma=0.02$.

\textbf{Attacks.} Four iterative-based white-box attacks including Fast Gradient Sign Method (FGSM)~\cite{FGSM}, Projected Gradient Descent (PGD)~\cite{PGD}, Momentum Iterative FGSM (MIFGSM or MI)~\cite{MIFGSM} and AutoAttack (AA)~\cite{AutoAttack} are evaluated, respectively. Three optimization-based white-box attacks including Carlini Wagner $L_{\infty}$ (CW$_{\infty}$), $L_2$ (CW$_2$)~\cite{CW} and DeepFool $L_{2}$ (DF$_2$)~\cite{DeepFool}) and two black-box attacks including are Square (Sq)~\cite{Square} and NATTACK (NA)~\cite{NATTACK} are evaluated as well.

\textbf{Metric.} In the evaluation, the effectiveness of our FACM model is evaluated in terms of classification accuracy, the attack time (i.e., the time required for the attacks), and inference time (i.e., the time to the classification) on the test set. For the white-box attacks, all examples in the test set are used to evaluate the accuracy of each method. For black-box attacks, the first 1000 examples in the test set are used in the evaluation for Square, while the first 200 examples are used for NATTACK. Note that the attack time and inference time are the total calculation time on the test set, rather than the average calculation time of an example. Additionally, the average classification accuracy on the specified attacks (i.e., three optimization-based white-box attacks~\cite{DeepFool, CW} and two query-based black-box attacks~\cite{Square,NATTACK}) is considered to evaluate the effectiveness of our FACM model. \textbf{Avg.} in each table represents the average classification accuracy of the specified attacks.

%The \textbf{average} or \textbf{avg.} in Tables~\ref{tab1}, \ref{tab2}, \ref{tab3}, \ref{tab4}, \ref{tab5}, \ref{tab6} and \ref{tab:randomization_comparisons} is the average classificaution accuracy on the three optimization-based white-box attacks and two query-based black-box attacks.

To explain the \textit{Diversity Property} of the FA and CMPD correction modules in the FACM model against adversarial examples, a difference metric, denoted as $\zeta \left( c _1,c _2, \mathcal{S} \right)$, between two correction models $c_1$ and $c_2$ on the test set $\mathcal{S}$ is defined as
\begin{align}
\label{equation:difference}
\zeta \left( c _1,c _2, \mathcal{S} \right) =\frac{\left\| \boldsymbol{v}_{\mathcal{S}}^{c _1}|\boldsymbol{v}_{\mathcal{S}}^{c _2} \right\| _1-\left\| \boldsymbol{v}_{\mathcal{S}}^{c _1}\&\boldsymbol{v}_{\mathcal{S}}^{c _2} \right\| _1}{\left\| \boldsymbol{v}_{\mathcal{S}}^{c _1}|\boldsymbol{v}_{\mathcal{S}}^{c _2} \right\|}
\end{align}
where $c_1$ and $c_2$ are two correction models (including FA and CMPD correction modules), $\mathcal{S}$ is the test set, $\boldsymbol{v}_{\mathcal{S}}^{c_1}$ denotes 0-1 vector in which each element represents whether the correction module $c_1$ correctly predicts a test example in $\mathcal{S}$, $|$ and $\&$ respectively represent OR and AND operations, $\left\| \cdot \right\| _1$ denotes the 1 norm.

\textbf{Hyperparameters of the FACM model.} In the implementation of the FACM model, both the FA and the CMPD correction modules are fine-tuned by stochastic gradient descent (SGD) optimizer with a learning rate of 0.0005 for 10 epochs and 0.001 for 30 epochs, respectively. For the naturally trained classifier, the FA and CMPD correction modules are fine-tuned by Eqs.~\ref{equation:objective_function_of_auxiliary_classifier} and \ref{equation:objective_function_of_FA_correction_model} and Eq.~\ref{equation:objection_of_CMPD}, respectively. For the TRADES adversarially trained classifier, the FA correction module is fine-tuned by Eqs.~\ref{equation:trades_objection_of_auxiliary_classifier} and \ref{equation:trades_objection_of_FA_correction_model}:
\begin{align}
\label{equation:trades_objection_of_auxiliary_classifier}
\underset{\theta^{f_i}}{\min}\mathbb{E} _{\left( x,y \right) \sim \mathcal{D}}\left[ \begin{array}{c}
L\left( f_i\left( l_i\left(x\right) \right) ,y \right)+\beta \cdot \underset{\delta \in \mathcal{B}}{\max}L_{KL}(\\
f_i( l_i(x+\delta) ) ,f_i( l_i(x) ) )\\
\end{array} \right]
\end{align}
\begin{align}
\label{equation:trades_objection_of_FA_correction_model}
\underset{\theta^{\varphi_i}}{\min}\mathbb{E} _{\left( x,y \right) \sim \mathcal{D}}\left[ \begin{array}{c}
L\left( \varphi_i\left( f(x)\oplus f_i\left(l_i(x)\right) \right) ,y \right)+\beta \cdot \underset{\delta \in \mathcal{B}}{\max}\\
L_{KL}( \varphi_i( f(x+\delta)\oplus f_i\left(l_i(x+\delta)\right) ), \\
\varphi_i( f(x)\oplus f_i\left(l_i(x)\right) ) )\\
\end{array} \right].
\end{align}
For adversarially trained classifier, the CMPD module is still fine-tuned by Eq.~\ref{equation:objection_of_CMPD} using clean examples. Finally, the decision module is trained by Algorithm~\ref{alg:decision_model_training} with the SGD optimizer with a learning rate of 0.1 for 20 epochs. The hyperparameters of the algorithm are set as $T=10, \mathcal{E}=[2/255,4/255,8/255], \mathcal{A}=[0.3/255,0.4/255,0.9/255]$ on CIFAR and $T=20, \mathcal{E}=[0.1,0.2,0.3], \mathcal{A}=[0.01,0.02,0.03]$ on MNIST. The parameter $\gamma$ in Eq.~\ref{equation:multilabel_focal_loss} is set as 2. A multistep learning rate scheduler is used in each training phase in our FACM model to decay the learning rate by 0.1 at 1/4 and 3/4 of the total epoch.

\begin{table}
\centering \caption{The hyperparameters of each attack on CIFAR/MNIST.}
\renewcommand\arraystretch{1.0}
\footnotesize
\begin{center}
\begin{tabular}{ccccc}
\hline
Attack   & $\epsilon$  & $\alpha$       & $T$         & $\eta$      \\ \hline
FGSM     & $\frac{8}{255}/0.3$ & -            & -         & -         \\
PGD      & $\frac{8}{255}/0.3$ & $\frac{0.8}{255}/0.03$ & 20/40     & -         \\
MIFGSM   & $\frac{8}{255}/0.3$ & $\frac{2}{255}/0.1$    & 5/5       & -         \\
AA       & $\frac{8}{255}/0.3$ & -            & -         & -         \\
CW$_\infty$    & $\frac{8}{255}/0.3$ & -            & 10/50     & 0.01/0.01 \\
CW$_2$      & $0.005/10$  & -            & 10/50     & 0.2/1.0   \\
DeepFool$_2$ & -         & -            & 50/50     & -         \\
Square   & $0.05/0.3$  & -            & 5000/5000 & -         \\
NATTACK  & $\frac{8}{255}/0.3$ & -            & 500/500   & -         \\ \hline
\end{tabular}
\end{center}
\label{tab:hyperparameters_of_attacks}
\vskip -0.2in
\end{table}

\textbf{Hyperparameters of the attacks.} The aforementioned white-box attacks and black-box attacks are implemented in \cite{TorchAttacks}. The hyperparameters of each attack are introduced in Table~\ref{tab:hyperparameters_of_attacks} where $\epsilon, \alpha, T, \eta$ denote the attack strength, the step size, the number of steps, and the learning rate, respectively. Besides, the overshoot is 0.02 on all datasets for DeepFool$_2$. The population is 100 on all datasets for NATTACK.

In addition, the evaluations consider two settings on the white-box attacks: grey-box and white-box settings, respectively. \textbf{In the grey-box setting (denoted as FACM-grey)}, the adversary only has full knowledge of the DNN-implemented classifier and does not have any information about our FACM model. \textbf{In the white-box setting (denoted as FACM-white)}, the adversary has the full knowledge of both the DNN-implemented classified and our FACM model. 

%
%\begin{figure*}[ht]
%\begin{floatrow}
%\ffigbox[\FBwidth]{
%\includegraphics[width=0.49\textwidth]{FIG1_correction_effect_of_middle_feature_layer_grey}
%}{\caption{The test accuracy curve of different FA correction models $\{\varphi_i\}$ and the %classifier $f$ on MNIST, CIFAR10 and CIFAR100 when the naturally trained classifier $f$ is attacked %by FGSM with different $\epsilon$.}\label{FIG1}}
%\ffigbox[\FBwidth]{
%\includegraphics[width=0.49\textwidth]{FIG2_comparison_between_CMPD_and_MPD_grey}
%}{\caption{The test accuracy curve of different CMPD correction models $\{f\circ g_i\}$ on MNIST, %CIFAR10 and CIFAR100 when the naturally trained classifier $f$ is attacked by FGSM with different %$\epsilon$.}\label{FIG2}}
%\end{floatrow}
%\end{figure*}

%\begin{figure}[ht]
%\begin{center}
%\centerline{\includegraphics[width=\columnwidth]{FIG5_orthogonal_matrix_heatmap_f_fi_natural_grey}}
%\caption{The difference (which is calculated by Eq.~\ref{equation:difference}) matrix heatmap between different middle layer classifiers $\{f+f_i\}$ and the DNN model $f$ on MNIST, CIFAR10 and CIFAR100 when the naturally trained DNN model $f$ (i.e., FACM-grey) is attacked by FGSM with different $\epsilon$ (/255 on CIFAR10/100).}
%\label{FIG5}
%\end{center}
%\vskip -0.2in
%\end{figure}

\begin{figure*}[t]
\begin{center}
\subfloat[The test accuracy curve of different FA correction modules $\{\varphi_i\}$ and the classifier $f$ on MNIST, CIFAR10 and CIFAR100 when the naturally trained classifier $f$ is attacked by FGSM with different $\epsilon$.]{\includegraphics[width=0.48\textwidth]{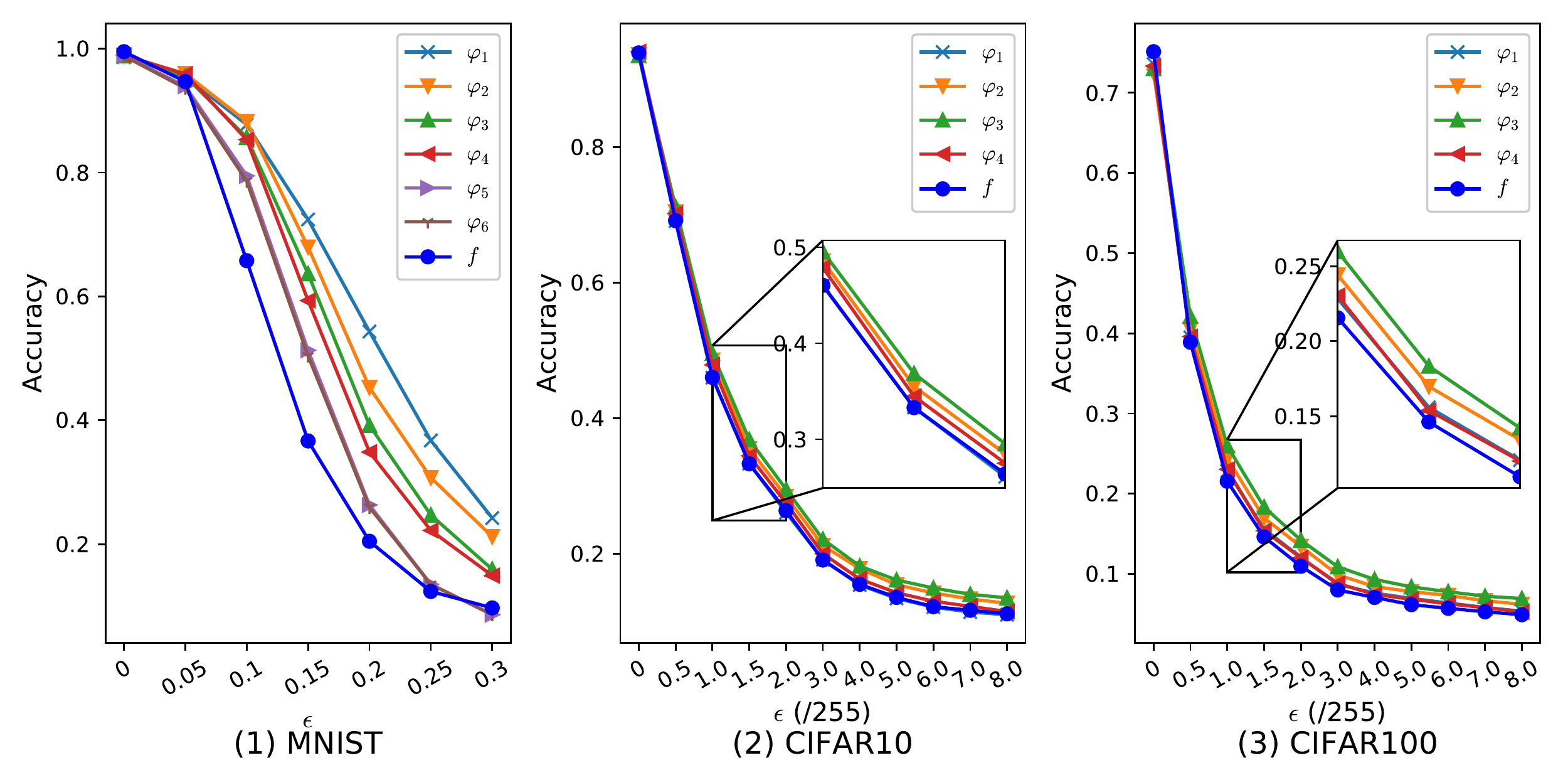}
\label{FIG1}
}
\hspace{1.0mm}
\subfloat[The test accuracy curve of different CMPD correction modules $\{f\circ g_i\}$ on MNIST, CIFAR10 and CIFAR100 when the naturally trained classifier $f$ is attacked by FGSM with different $\epsilon$.]{\includegraphics[width=0.48\textwidth]{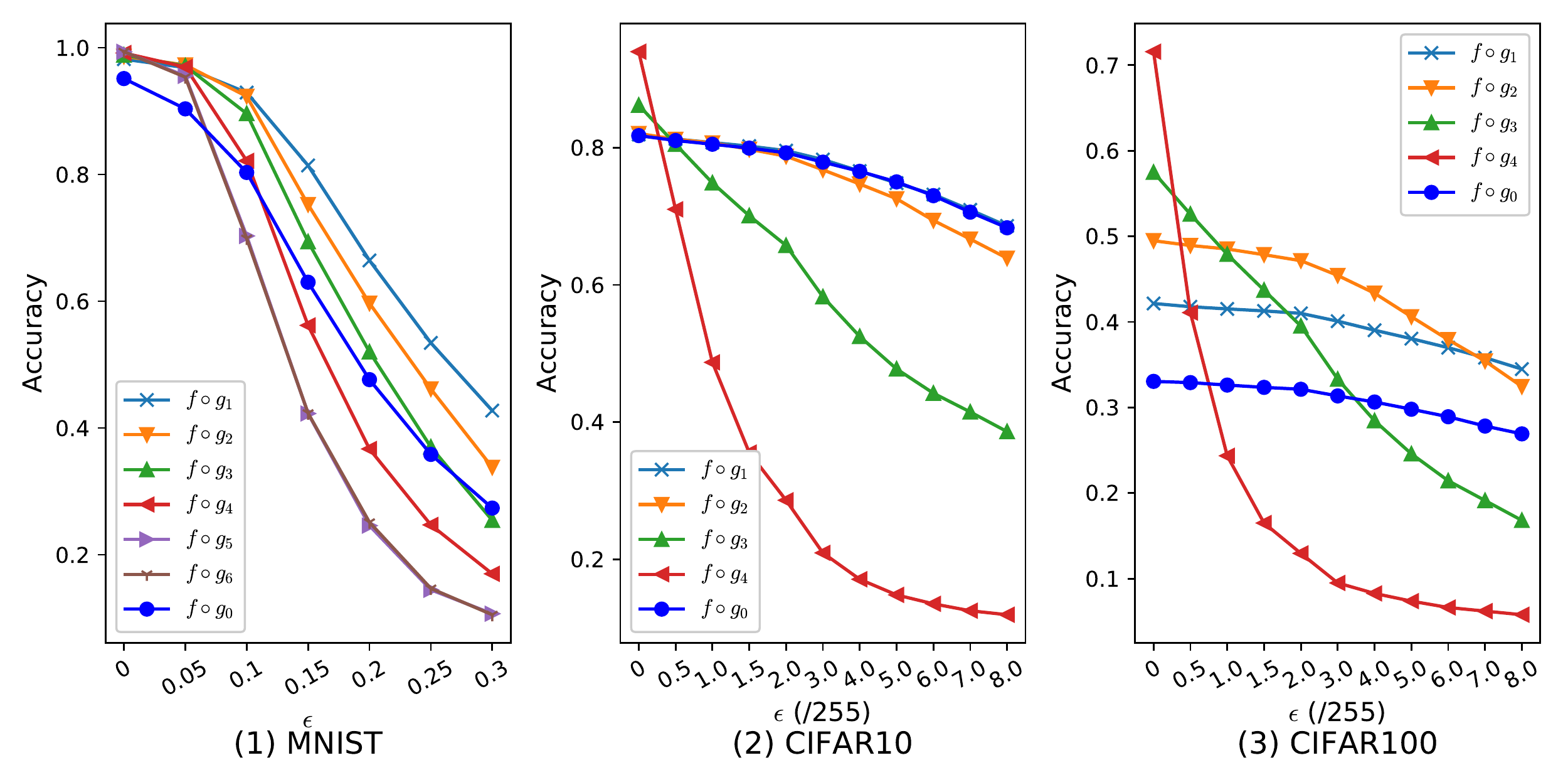}
\label{FIG2}
}
\end{center}
\caption{The test accuracy curve.}
\end{figure*}

%\begin{figure}[ht]
%\begin{center}
%\centerline{\includegraphics[width=\columnwidth]{FIG9_orthogonal_matrix_heatmap_CMPD_natural_grey}}
%\caption{The difference (which is calculated by Eq.~\ref{equation:difference}) matrix heatmap between different CMPD models $\{g_i\}$ on MNIST, CIFAR10 and CIFAR100 when the naturally trained DNN model $f$ (i.e., FACM-grey) is attacked by FGSM with different $\epsilon$ (/255 on CIFAR10/100).}
%\label{FIG9}
%\end{center}
%\vskip -0.2in
%\end{figure}

\begin{figure*}[ht]
\centering
\subfloat[The difference (which is calculated by Eq.~\ref{equation:difference}) matrix heatmap between different FA correction models $\{\varphi_i\}$ and the classifier $f$ on MNIST, CIFAR10 and CIFAR100 when the naturally trained classifier $f$ is attacked by FGSM with different $\epsilon$.]{\includegraphics[width=0.48\textwidth]{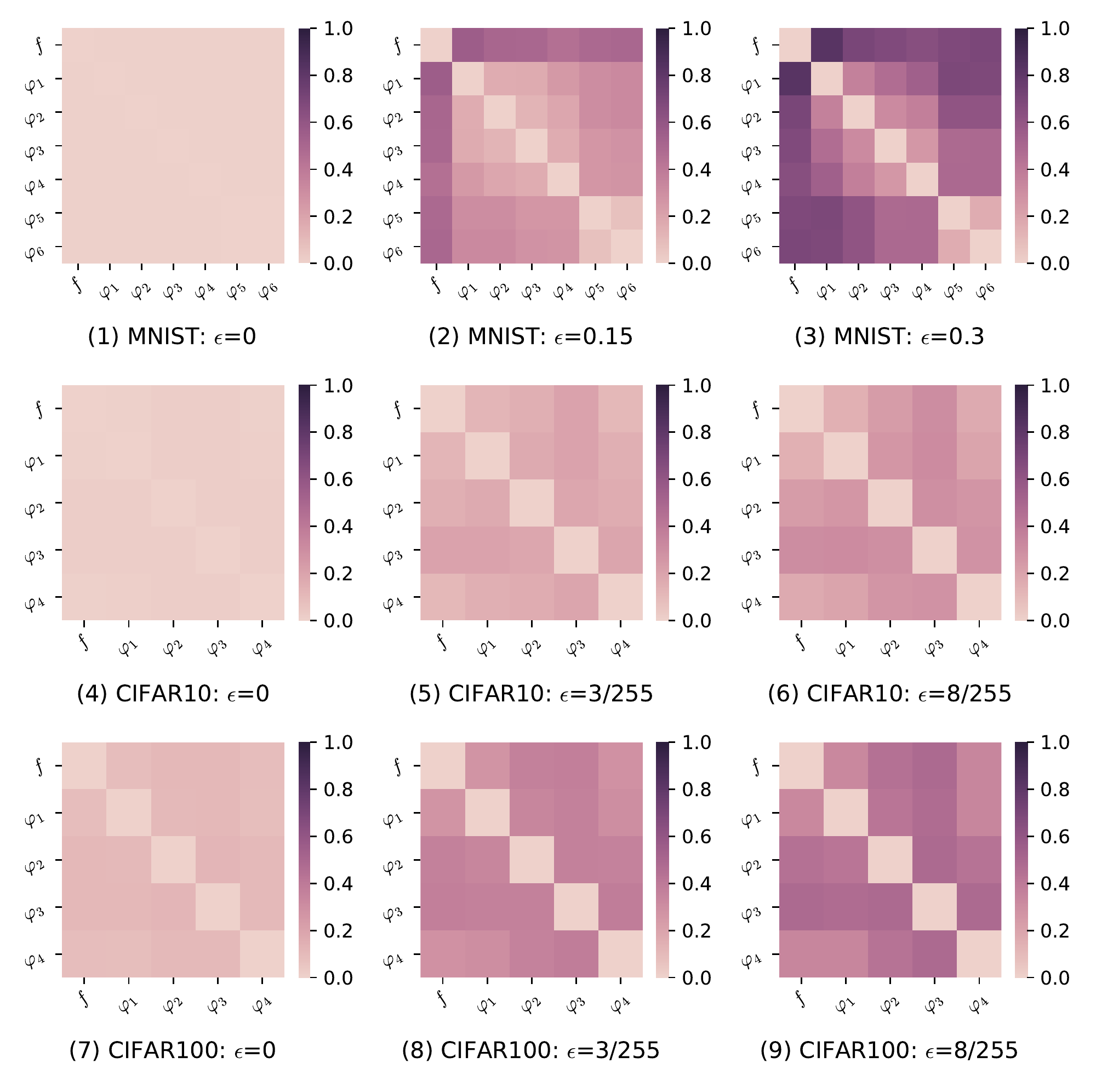}
\label{FIG5}
}
\hspace{1.0mm}
\subfloat[The difference (which is calculated by Eq.~\ref{equation:difference}) matrix heatmap between different CMPD correction models $\{f\circ g_i\}$ on MNIST, CIFAR10 and CIFAR100 when the naturally trained classifier $f$ is attacked by FGSM with different $\epsilon$.]{\includegraphics[width=0.48\textwidth]{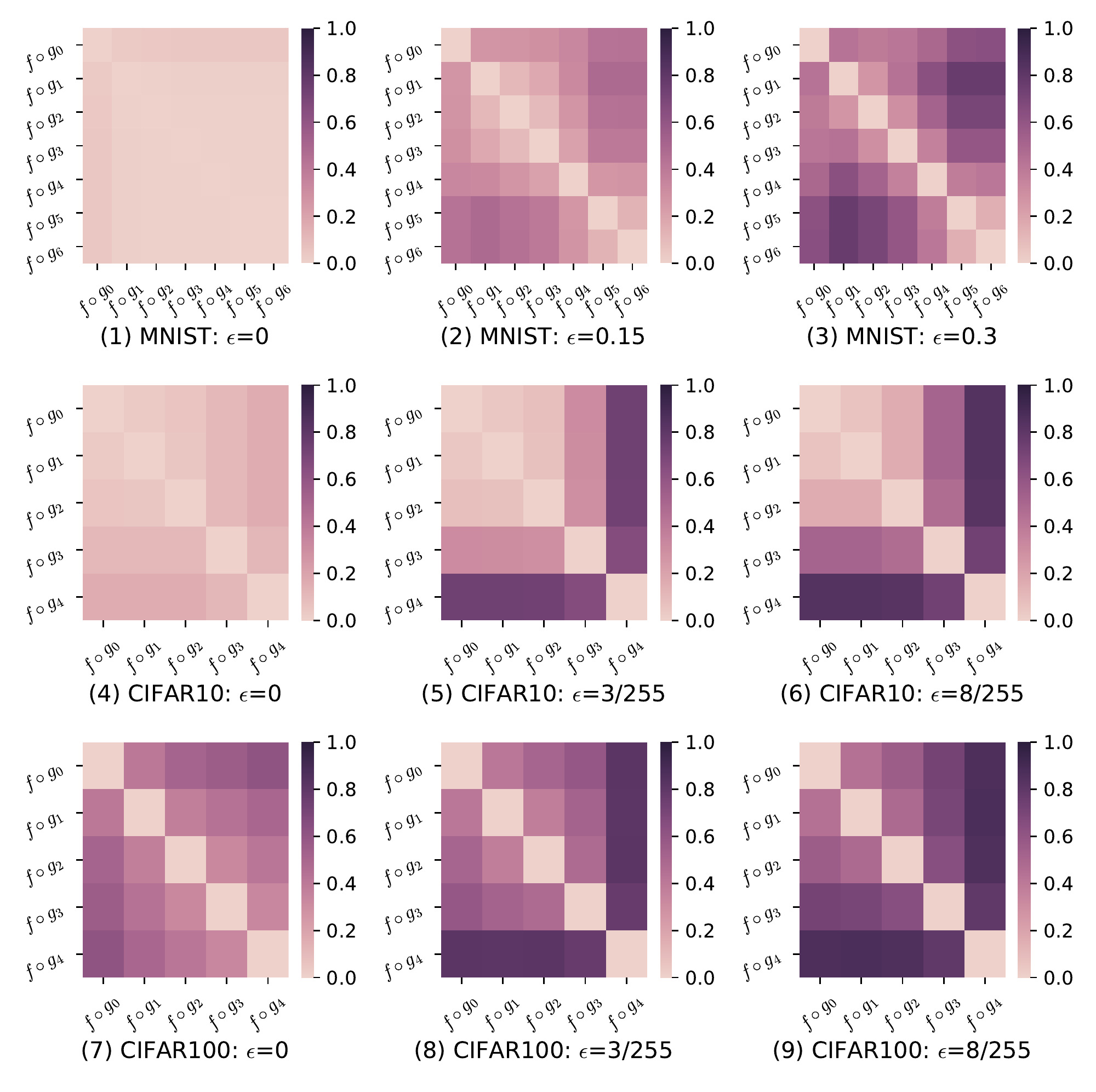}
\label{FIG9}
}
\caption{The difference matrix heatmap.}
\end{figure*}

\subsection{The Effect of the FA Correction Modules}
\label{sec:the_effect_of_FA_correction_modules}
To verify the effect of the \textit{Correction Property} of the FA correction modules, i.e., the intermediate layer's features can correct the output of the classifier during adversarial attacks, Fig.~\ref{FIG1} compares the test accuracy of the classifier $f$ with and without the FA correction modules $\{\varphi_i\}$ when the classifier $f$ attacked by FGSM (i.e., grey-box setting) in different attack strengths on MNIST, CIFAR10 and CIFAR100 datasets. As shown in Fig.~\ref{FIG1}, the blue curve represents the test accuracy of the classifier $f$ without FA correction modules, and the other curves represent that of the FA correction models. Fig.~\ref{FIG1} shows that as the attack strength $\epsilon$ increases, the test accuracy of the classifier $f$ without FA correction modules decreases faster and is lower than the test accuracy of the classifier $f$ with all types of FA correction modules. That means the FA correction modules have a positive effect on improving the test accuracy of the classifier against adversarial examples with various attack strengths.

In addition, Fig.~\ref{FIG5} verifies the \textit{Diversity Property} of the FA correction modules when the classifier $f$ suffers from adversarial attacks. Fig.~\ref{FIG5} shows the matrix heatmaps among different FA correction modules on three datasets with different attack strengths. The results demonstrate that as the attack strength gradually increases, the differences among various FA correction modules are increasingly apparent. Hence, the \textit{Diversity Property} exists among the FA correction modules.

Note that, Fig.~\ref{FIG1} and Fig.~\ref{FIG5} demonstrate the \textit{Correction Property} and the \textit{Diversity Property} of the FA correction modules against adversarial attacks under the grey-box setting. The corresponding evaluations of the FA correction modules against attacks under the white-box setting are shown in the Supplementary, which shows that, in the adversarial attacks under the white-box setting, although the FA correction modules can effectively correct the outputs of the classifier only on MNIST dataset, the \textit{Diversity Property} of the FA correction modules can also be demonstrated to exist in all MINST, CIFAR10, and CIFAR100 datasets.

\subsection{The Effect of the CMPD Correction Modules}
\label{sec:the_effect_of_CMPD_correction_modules}
To verify the effect of the \textit{Correction Property} of the CMPD correction modules when the classifier $f$ is attacked, the test accuracy of the CMPD correction modules is compared to that of the MPD correction model $f\circ g_0$ when the classifier $f$ is attacked by FGSM with different attack strengths on MNIST, CIFAR10, and CIFAR100, respectively. As shown in Fig.~\ref{FIG2}, the blue curve represents the test accuracy of the MPD correction module $f\circ g_0$ and the other curves represent the test accuracy of the various CMPD correction modules. Fig.~\ref{FIG2} shows that the CMPD correction module $f\circ g_i$ that uses the outputs of the shallow auxiliary classifiers (i.e., $i$ is small) as a condition achieves greater test accuracy than the MPD correction module $f\circ g_0$ on adversarial examples with different attack strengths in MNIST and CIFAR100 dataset, and achieves comparable test accuracy to the MPD correction module $f\circ g_0$ in CIFAR10 dataset. The results demonstrate that the CMPD correction modules can achieve great effectiveness in correcting the outputs of the classifier on adversarial examples.

In addition, Fig.~\ref{FIG9} verifies the \textit{Diversity Property} of the CMPD correction modules when the classifier $f$ is attacked. As shown in Fig.~\ref{FIG9}, the difference matrix heatmaps of the CMPD correction modules with different attack strengths show that, as the attack strength gradually increases, the differences among the CMPD correction modules significantly increase as well. That means the \textit{Diversity Property} exists among the various CMPD correction modules.

Note that, Fig.~\ref{FIG2} and Fig.~\ref{FIG9} demonstrate the \textit{Correction Property} and the \textit{Diversity Property} of the CMPD correction modules against adversarial attacks with the grey-box setting. The corresponding evaluations of the CMPD correction modules against attacks under the white-box setting shown in the Supplementary, which show that, in the adversarial attacks under the white-box setting, the \textit{Correction Property} and \textit{Diversity Property} of the CMPD correction modules are demonstrated to exist in all MINST, CIFAR10, and CIFAR100 datasets. That is, the CMPD correction modules can effectively correct the outputs of the classifier in the adversarial attacks with the white-box setting.

\begin{table*}[t]
\caption{Evaluation (test accuracy(\%)) of the naturally trained classifier with and without the FACM model against various attacks on CIFAR10/100 and MNIST datasets. F1, P1 represent FGSM($\epsilon$=2/255 on CIFAR10/100 and $\epsilon$=0.1 on MNIST), PGD($\epsilon$=2/255, $\alpha$=0.2/255, $T$=20 on CIFAR10/100 and $\epsilon$=0.1, $\alpha$=0.01, $T$=40 on MNIST), respectively. The attacks with gray background color represent the specified attacks.}
\renewcommand\arraystretch{1.0}
\footnotesize
\centering
\resizebox{1.0\textwidth}{!}{
\begin{tabular}{ccccccccccc}
\hline
\multicolumn{3}{c|}{}                                                         & \multicolumn{5}{c|}{White-box attacks}                                                                           & \multicolumn{2}{c|}{\multirow{2}{*}{\begin{tabular}[c]{@{}c@{}}Black-box\\ attacks\end{tabular}}} & \multirow{2}{*}{\textbf{}} \\ \cline{1-8}
\multicolumn{3}{c|}{}                                                         & \multicolumn{2}{c|}{iterative-based}      & \multicolumn{3}{c|}{optimization-based}                              & \multicolumn{2}{c|}{}                                                                             &                                \\ \cline{1-10}
                          & Method               & \multicolumn{1}{c|}{Clean} & FGSM           & \multicolumn{1}{c|}{PGD} & \cellcolor{lightgray}DF$_{2}$       & \cellcolor{lightgray}CW$_{2}$       & \cellcolor{lightgray} CW$_{\infty}$ & \cellcolor{lightgray}Square                                           & \cellcolor{lightgray}NATTACK                   & \cellcolor{lightgray}\textbf{Avg.}                               \\ \hline
\multirow{3}{*}{CIFAR10}  & Natural              & \textbf{94.41}             & 9.69           & 0                        & 3.96           & 9.59           & 0                                  & 0                                                & 1.5                                            & 3.01                           \\
                          & +FACM-white($\tau$=3) & 91.91                      & \textbf{26.14} & \textbf{0.61}            & \textbf{13.41} & \textbf{18.62} & \textbf{23.76}                     & \multirow{2}{*}{\textbf{70.3}}                   & \multirow{2}{*}{\textbf{80}}                   & \textbf{41.22}                 \\ \cline{2-8} \cline{11-11} 
                          & +FACM-grey($\tau$=3)  & 92.08                      & 48.76          & 11.32                    & 77.61          & 34.72          & 51.06                              &                                                  &                                                & 62.74                          \\ \hline
\multirow{3}{*}{CIFAR100} & Natural              & \textbf{75.84}             & 5.3            & 0                        & 5.99           & 4.06           & 0                                  & 0                                                & 0                                              & 2.01                           \\
                          & +FACM-white($\tau$=3) & 73.37                      & \textbf{17.63} & \textbf{9.78}            & \textbf{35.38} & \textbf{10.6}  & \textbf{20.91}                     & \multirow{2}{*}{\textbf{34.2}}                   & \multirow{2}{*}{\textbf{51}}                   & \textbf{30.42}                 \\ \cline{2-8} \cline{11-11} 
                          & +FACM-gray($\tau$=3)  & 72.93                      & 23.84          & 19                       & 54.88          & 13.1           & 35.87                              &                                                  &                                                & 37.81                          \\ \hline
\multirow{3}{*}{MNIST}    & Natural              & \textbf{99.53}             & 9.7            & 0                        & 1.6            & 2.7            & 0                                  & 0                                                & 32                                             & 7.26                           \\
                          & +FACM-white($\tau$=1) & 98.60                      & \textbf{13.89} & 0                        & \textbf{6.95}  & \textbf{75.39} & \textbf{50.76}                     & \multirow{2}{*}{\textbf{60.8}}                   & \multirow{2}{*}{\textbf{98}}                   & \textbf{58.38}                 \\ \cline{2-8} \cline{11-11} 
                          & +FACM-grey($\tau$=1)  & 98.68                      & 23.36          & 8.07                     & 85.37          & 61.37          & 79.63                              &                                                  &                                                & 77.03                          \\ \hline
\end{tabular}
}
\label{tab1}
\end{table*}

\subsection{The Effect of the FACM Model}
\label{sec:the_effect_of_the_FACM_model}
To verify that the FACM model can effectively improve the robustness of naturally trained classifiers against various adversarial attacks, especially optimization-based white-box attacks and query-based black-box attacks, we compare the classification accuracy of the naturally trained classifier with and without the FACM model on CIFAR10/100 and MNIST. As shown in Table~\ref{tab1}, our model can improve the classification accuracy against iterative-based white-box attacks under the white-box setting. Besides, the average improvement of the classification accuracy with our model is 38.21\% on CIFAR10, 28.41\% on CIFAR100, and 51.12\% on MNIST against the specified attacks, i.e. the optimization-based white-box attacks and query-based black-box attacks. The classification accuracy with our model on clean samples only decreases by 2.5\%, 2.47\%, and 0.93\% on CIFAR10/100 and MNIST, respectively. Under the grey-box setting, the FACM model can provide greater robustness for the naturally trained classifier against all types of attacks. Hence, in both white-box and black-box settings, the FACM model can significantly improve classification accuracy of the naturally trained classifiers against optimization-based white-box attacks and query-based black-box attacks.

\begin{table*}[t]
\caption{Evaluation (test accuracy(\%)) of different randomization methods against various attacks on CIFAR10. The attack strength of FGSM and PGD is set as $2/255$ for attacking the FACM based naturally trained classifier. The attacks with gray background color represent the specified attacks.}
\renewcommand\arraystretch{1.0}
\centering
\footnotesize
\resizebox{1.0\textwidth}{!}{
\begin{tabular}{cccccccccc}
\hline
        & Clean         & FGSM           & PGD           & \cellcolor{lightgray}CW$_2$            & \cellcolor{lightgray}CW$_\infty$         & \cellcolor{lightgray}DeepFool            & \cellcolor{lightgray}Square            & \cellcolor{lightgray}NATTACK            & \cellcolor{lightgray}\textbf{Avg.}           \\ \hline
Natural & \textbf{94.4} & 9.54           & 0             & 9.51           & 0              & 3.5            & 0.1           & 1.5           & 2.92          \\
RP~\cite{Mitigating-adversarial-effects}      & 93.52         & 20.38          & 0.87          & 16.44          & 81.83          & \textbf{91.27} & 70.1          & 76.5          & 67.23          \\
RS~\cite{RS}      & 89.4          & 9.9            & 0             & 11             & 65.7           & 79.6           & 26            & 51            & 46.66          \\
FACM-grey($\tau=3$)    & 92.08         & 49.61 & 25.62         & 34.72          & 51.06          & 77.61           & 70.3          & 80          & 62.74          \\
RP+FACM-grey($\tau=3$) & 91.68         & \textbf{64.6}          & \textbf{54.09}         & \textbf{37.91} & \textbf{83.52} & 89.47 & \textbf{77.8} & \textbf{87} & \textbf{75.14} \\
RS+FACM-grey($\tau=3$) & 87.6          & 51.8           & 34 & 32.3           & 73.8           & 83.2           & 66.5            & 74.5          & 66.06          \\ \hline
\end{tabular}
}
\label{tab:randomization_comparisons}
\end{table*}

\subsection{The Compatibility with the Existing Defense Methods}
\label{sec:comparison_with_the_existing_defense_methods}
This section will investigate the compatibility of our FACM model with the existing defense methods, including randomization methods and adversarial training methods.
\subsubsection{The Compatibility between the FACM Model and Randomization Methods}
\label{sec:compatibility_between_FACM_and_Randomization}
In this section, we study the compatibility between our FACM model and two randomization methods (i.e., RP~\cite{Mitigating-adversarial-effects} and RS~\cite{RS}) on the naturally trained classifier under the grey-box setting. Note that, unlike other randomization methods~\cite{SLQ,Adversarial-noise-layer,Random-self-ensemble,Randomized-diversification,SORS}, RP~\cite{Mitigating-adversarial-effects} and RS~\cite{RS} do not need to change the architecture of the classifier and retrain it like our FACM model. As shown in Table~\ref{tab:randomization_comparisons}, our FACM model has higher robustness on iterative-based white-box attacks and competitive performance on optimization-based white-box attacks and query-based black-box attacks. When combining our FACM model with these randomization methods, our model can significantly improve the robustness of RP~\cite{Mitigating-adversarial-effects} and RS~\cite{RS} against all attacks except for the competitive robustness on DeepFool. Hence, our FACM model can be compatible with the existing randomization methods.

\begin{figure*}[ht]
\centering
\centerline{\includegraphics[width=\textwidth]{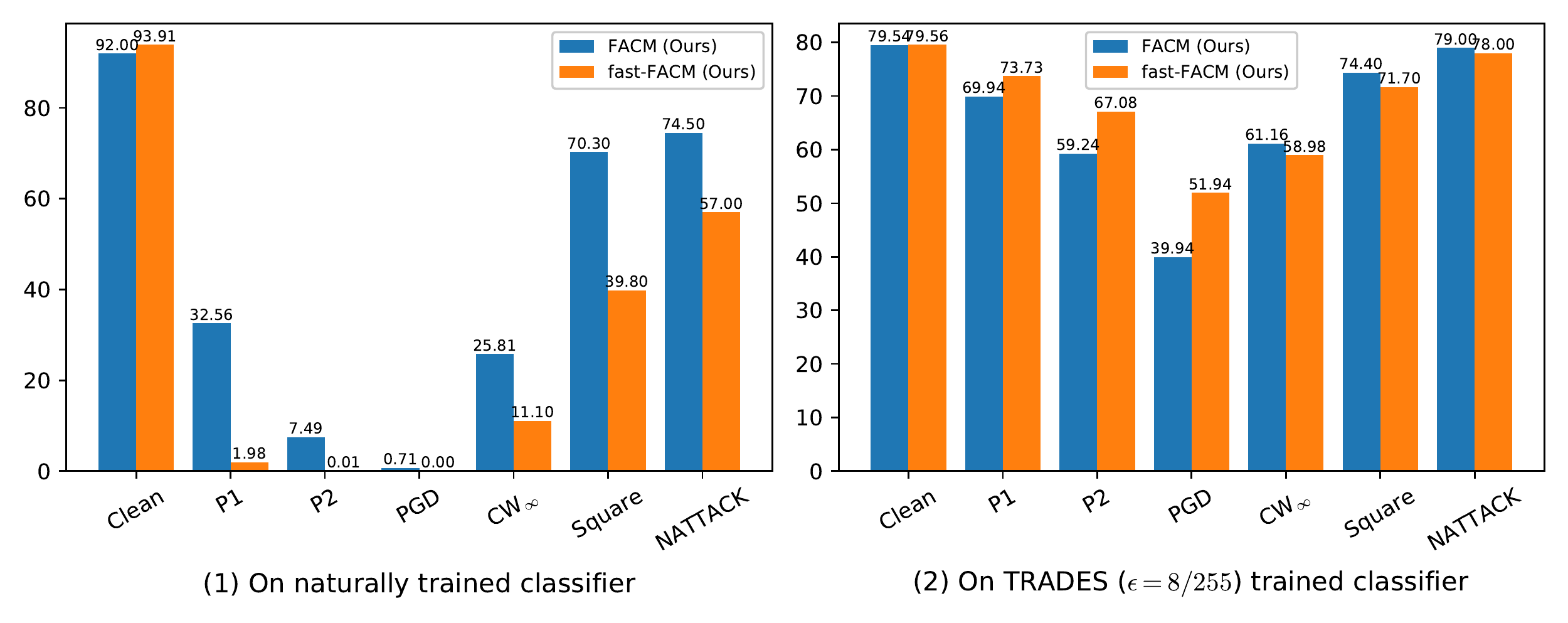}}
%\vskip -0.1in
\caption{The test accuracy (\%) comparison between FACM and fast-FACM on naturally trained classifier and TRADES ($\epsilon=8/255$) trained classifier against various attacks on CIFAR10 dataset. P1, P2 represent PGD with the parameters $\epsilon=2/255$, $\alpha=0.2/255$, $T=20$ and $\epsilon=4/255$, $\alpha=0.4/255$, $T=20$, respectively.}
\label{FIG13}
\end{figure*}

%\subsubsection{The Effect of CMPD Correction Modules on naturally and adversarially trained classifier}
%\label{sec:comparison_between_FACM_and_FA}

\subsubsection{The Comparison between the FACM Model and fast-FACM model}
\label{sec:comparison_between_FACM_and_fastFACM}
In this section, we propose a more efficient model, named fast-FACM. In comparison with the FACM model, the fast-FACM model removes the CMPD modules. Hence, the model can achieve less construction complexity and faster inference speed, as shown in Table~\ref{tab:inference_time}. Then, we compare the effect of the FACM and fast-FACM models on naturally trained and adversarially trained classifiers, in which TRADES~\cite{TRADES} is chosen to train the classifier with the attack strengths $\epsilon=8/255$. As shown in Fig.~\ref{FIG13}, the FACM model has higher robustness against various attacks than fast-FACM on the naturally trained classifier. However, fast-FACM has higher robustness against iterative-based white-box attacks and approaching robustness against other attacks on the adversarially trained classifier. Besides, the fast-FACM model has less construction complexity and faster inference time, as shown in Table~\ref{tab:inference_time}. Therefore, the fast-FACM model is more compatible with adversarial training methods than the FACM model.

\begin{table*}[t]
\caption{Evaluation (test accuracy(\%)) of the adversarially trained classifier with and without the fast-FACM (fFACM) model against various attacks on CIFAR10/100 and MNIST datasets. The attacks with gray background color represent the specified attacks.}
\renewcommand\arraystretch{1.0}
\centering
\footnotesize
\resizebox{1.0\textwidth}{!}{
\begin{tabular}{ccccccccccccc}
\hline
                                                                     &                           & \multicolumn{1}{c|}{}      & \multicolumn{7}{c|}{White-box attacks}                                                                                                       & \multicolumn{2}{c|}{\multirow{2}{*}{\begin{tabular}[c]{@{}c@{}}Black-box\\ attacks\end{tabular}}} & \multirow{2}{*}{\textbf{}} \\ \cline{1-10}
                                                                     &                           & \multicolumn{1}{c|}{}      & \multicolumn{4}{c|}{iterative-based}                                & \multicolumn{3}{c|}{optimization-based}                            & \multicolumn{2}{c|}{}                                                                             &                                \\ \cline{1-12}
                                                                     & Method                    & \multicolumn{1}{c|}{Clean} & FGSM           & PGD            & MI             & \multicolumn{1}{c|}{AA} & \cellcolor{lightgray}DF$_{2}$       & \cellcolor{lightgray}CW$_{2}$       & \cellcolor{lightgray}CW$_{\infty}$ & \cellcolor{lightgray}Sq                                              & \cellcolor{lightgray}NA                         & \cellcolor{lightgray}\textbf{Avg.}                               \\ \hline
\multirow{3}{*}{\begin{tabular}[c]{@{}c@{}}CIFAR\\ 10\end{tabular}}  & TRADES \cite{TRADES}                   & \textbf{80.98}             & \textbf{55.8}  & \textbf{51.83} & \textbf{53.76} & \textbf{47.2}           & 0.59           & 25.2           & 48.54                              & 34.4                                            & 69                                              & 35.55                          \\
                                                                     & +fFACM-white($\tau=1$) & 79.42                      & 55.09          & 51.63          & 53.66          & 47.1                    & \textbf{33.66} & \textbf{27.96} & \textbf{61.78}                     & \multirow{2}{*}{\textbf{72.6}}                  & \multirow{2}{*}{\textbf{76}}                    & \textbf{54.4}                  \\ \cline{2-10} \cline{13-13} 
                                                                     & +fFACM-grey($\tau$=1)  & 79.47                      & 55.59          & 51.86          & 53.99          & -                       & 35.42          & 27.69          & 59.1                               &                                                 &                                                 & 54.16                          \\ \hline
\multirow{3}{*}{\begin{tabular}[c]{@{}c@{}}CIFAR\\ 100\end{tabular}} & TRADES \cite{TRADES}                   & \textbf{55.91}             & \textbf{29.23} & \textbf{26.91} & \textbf{27.82} & 21.7                    & 0.36           & 8.9            & 22.36                              & 13.3                                            & 39.5                                            & 16.88                          \\
                                                                     & +fFACM-white($\tau$=3) & 51.17                      & 27.62          & 26.33          & 26.83          & \textbf{22.0}           & \textbf{25.46} & \textbf{12.67} & \textbf{36.53}                     & \multirow{2}{*}{\textbf{46.7}}                  & \multirow{2}{*}{\textbf{51}}                    & \textbf{32.17}                 \\ \cline{2-10} \cline{13-13} 
                                                                     & +fFACM-grey($\tau$=3)  & 51.34                      & 28.81          & 27.11          & 27.91          & -                       & 32.99          & 13.99          & 38.17                              &                                                 &                                                 & 36.57                          \\ \hline
\multirow{3}{*}{MNIST}                                               & TRADES \cite{TRADES}                   & \textbf{99.48}             & \textbf{93.36} & 70.69          & 82.32          & 27.7                    & 4.43           & 68.17          & 96.19                              & 19.7                                            & 95.5                                            & 56.80                          \\
                                                                     & +fFACM-white($\tau$=2) & 98.62                      & 91.09          & \textbf{81.93} & \textbf{83.68} & \textbf{52.9}           & \textbf{40.79} & \textbf{96.37} & \textbf{97.03}                     & \multirow{2}{*}{\textbf{93.4}}                  & \multirow{2}{*}{\textbf{97.5}}                  & \textbf{85.02}                 \\ \cline{2-10} \cline{13-13} 
                                                                     & +fFACM-grey($\tau$=2)  & 98.69                      & 94.11          & 83.8           & 88.21          & -                       & 66.6           & 85.99          & 97.45                              &                                                 &                                                 & 88.19                          \\ \hline
\end{tabular}
}
\label{tab2}
\end{table*}

\subsubsection{The Effect of the fast-FACM Model on the Adversarially Trained Classifier}
\label{sec:fastFACM_on_the_adversarially_trained_DNN}
To verify the effect of the fast-FACM model on improving the robustness of the adversarially trained classifier against optimization-based white-box attacks and query-based black-box attacks, the classification accuracy of the TRADES trained model with and without the fast-FACM model is evaluated on CIFAR10/100 and MNIST, respectively. As shown in Table~\ref{tab2}, under the white-box setting, the fast-FACM model improves the average classification accuracy by 18.85\% on CIFAR10, 15.29\% on CIFAR100 and 28.22\% on MNIST, and keeps or slightly decreases the classification accuracy on clean examples and iterative-based white-box attacks on CIFAR10/100. Under the grey-box setting, the fast-FACM model can further improve the robustness of the adversarially trained classifier. The results demonstrate that the fast-FACM model can significantly improve the classification accuracy of the adversarially trained classifier against optimization-based white-box attacks and query-based black-box attacks.

\begin{table*}[t]
\caption{Evaluation (test accuracy(\%)) of different adversarial training methods on CIFAR10 against optimization-based white-box attacks and query-based black-box attacks. fFACM denotes fast-FACM.}
\renewcommand\arraystretch{1.0}
\centering
\footnotesize
\resizebox{1.0\textwidth}{!}{
\begin{tabular}{cccccccc}
\hline
                          & \multicolumn{3}{c|}{\begin{tabular}[c]{@{}c@{}}Optimization-based \\white-box attacks\end{tabular}} & \multicolumn{3}{c|}{Black-box attacks}                                              & \multirow{2}{*}{\textbf{Avg.}} \\ \cline{1-7}
Method                    & DF$_{2}$                    & CW$_{2}$                    & \multicolumn{1}{c|}{CW$_{\infty}$}              & Square ($\epsilon$=0.031) & Square ($\epsilon$=0.05) & \multicolumn{1}{c|}{NATTACK} &                                   \\ \hline
Fast-AT \cite{Fast-AT}                   & 0.71                        & 21.3                        & 45.03                                           & 49.0                      & 26.3                     & 68.5                         & 35.14                             \\
YOPO-5-3 \cite{YOPO}                 & 2.13                        & 11.73                       & 33.59                                           & 38.9                      & 17.7                     & 60                           & 27.34                             \\
ATHE \cite{ATHE}                     & 0.42                        & 24.13                       & 48.13                                           & 52.6                      & 33.8                     & 69.5                         & 38.10                             \\
FRL \cite{FRL}                      & 1.82                        & 5.4                         & 21.44                                           & 26.3                      & 9.3                      & 50.5                         & 19.13                             \\
FAT \cite{FAT}                      & 0.48                        & 25.13                       & 48.11                                           & 51.7                      & 32                       & 69                           & 37.74                             \\
ATTA \cite{ATTA}                     & 0.58                        & 21.78                       & 44.97                                           & 46.8                      & 30.9                     & 64                           & 34.84                             \\
TRADES+fFACM-white($\tau$=1) & \textbf{33.66}              & \textbf{27.96}              & \textbf{61.78}                                  & \textbf{76.2}             & \textbf{72.6}            & \textbf{76}                  & \textbf{58.03}                    \\ \hline
\end{tabular}
}
\label{tab3}
\end{table*}

\begin{table*}[t]
\caption{Evaluation (test accuracy(\%)) of different adversarial training methods on CIFAR100 against optimizaiton-based white-box attacks and query-based black-box attacks. fFACM denotes fast-FACM.}
\renewcommand\arraystretch{1.0}
\centering
\footnotesize
\resizebox{1.0\textwidth}{!}{
\begin{tabular}{cccccccc}
\hline
                          & \multicolumn{3}{c|}{\begin{tabular}[c]{@{}c@{}}Optimization-based \\white-box attacks\end{tabular}} & \multicolumn{3}{c|}{Black-box attacks}                                              & \multirow{2}{*}{\textbf{Avg.}} \\ \cline{1-7}
Method                    & DF$_{2}$                    & CW$_{2}$                    & \multicolumn{1}{c|}{CW$_{\infty}$}             & Square ($\epsilon$=0.031) & Square ($\epsilon$=0.05) & \multicolumn{1}{c|}{NATTACK} &                                   \\ \hline
Fast-AT \cite{Fast-AT}                  & 2.49                        & \textbf{17.78}              & 0                                              & 0                         & 0                        & 0                            & 3.38                              \\
YOPO-5-3 \cite{YOPO}                 & 0.82                        & 6.86                        & 20.83                                          & 23.5                      & 11                       & 36                           & 16.5                              \\
ATHE \cite{ATHE}                     & 0.43                        & 10.99                       & 24.82                                          & 26.8                      & 14.8                     & 38.5                         & 19.39                             \\
FRL \cite{FRL}                      & 1.93                        & 2.67                        & 6.94                                           & 8.1                       & 2.8                      & 19                           & 6.91                              \\
FAT \cite{FAT}                      & 0.57                        & 9.03                        & 22.43                                          & 23.6                      & 13.8                     & 36                           & 17.57                             \\
ATTA \cite{ATTA}                     & 0.63                        & 8.17                        & 13.98                                          & 13.9                      & 9.7                      & 20                           & 11.06                             \\
TRADES+fFACM-white($\tau$=3) & \textbf{25.46}              & 12.67                       & \textbf{36.53}                                 & \textbf{50.6}             & \textbf{46.7}            & \textbf{51}                  & \textbf{37.16}                    \\ \hline
\end{tabular}
}
\label{tab4}
\end{table*}

\begin{table*}[t]
\caption{Evaluation (test accuracy(\%)) of different adversarial training methods on MNIST against optimization-based white-box attacks and query-based black-box attacks. fFACM denotes fast-FACM.}
\renewcommand\arraystretch{1.0}
\centering
\footnotesize
\resizebox{1.0\textwidth}{!}{
\begin{tabular}{cccccccc}
\hline
                          & \multicolumn{3}{c|}{\begin{tabular}[c]{@{}c@{}}Optimization-based \\white-box attacks\end{tabular}} & \multicolumn{3}{c|}{Black-box attacks}                                           & \multirow{2}{*}{\textbf{Avg.}} \\ \cline{1-7}
Method                    & DF$_{2}$                    & CW$_{2}$                    & \multicolumn{1}{c|}{CW$_{\infty}$}             & Square ($\epsilon$=0.3) & Square ($\epsilon$=0.4) & \multicolumn{1}{c|}{NATTACK} &                                   \\ \hline
Fast-AT \cite{Fast-AT}                  & \textbf{67.16}              & 51.21                       & 90.61                                          & 70.3                    & 0                       & 95.5                         & 62.43                             \\
YOPO-5-10 \cite{YOPO}                & 33.82                       & 46.36                       & 85.66                                          & 69.6                    & 5.3                     & 90                           & 55.12                             \\
ATHE \cite{ATHE}                     & 1.77                        & 95.23                       & 95.76                                          & 86.2                    & 0                       & 97                           & 62.66                             \\
FRL \cite{FRL}                      & 16.8                        & 73.11                       & 93.8                                           & 67.2                    & 0                       & 94.5                         & 57.57                             \\
FAT \cite{FAT}                      & 0.79                        & 76.06                       & 95.67                                          & 59.3                    & 0                       & 96                           & 54.64                             \\
ATTA \cite{ATTA}                     & 2.5                         & 89.27                       & 96.81                                          & 92.4                    & 0                       & 97                           & 63.00                             \\
TRADES+fFACM-white($\tau$=2) & 40.79                       & \textbf{96.37}              & \textbf{97.03}                                 & \textbf{93.4}           & \textbf{27.9}           & \textbf{97.5}                & \textbf{75.50}                    \\ \hline
\end{tabular}
}
\label{tab5}
\end{table*}

\subsubsection{The Comparison between the fast-FACM model-based TRADES and the Other Adversarial Training Methods}
\label{sec:comparison_fastFACM_and_other_AT}
To verify that TRADES trained classifier with the fast-FACM model is more robust than the other adversarial training methods against optimization-based white-box attacks and query-based black-box attacks, we compare the classification accuracy of the fast-FACM model-based TRADES with the other six adversarial training methods on CIFAR10/100 and MNIST. As shown in Tables~\ref{tab3}, \ref{tab4}, \ref{tab5}, the adversarial training methods generally have low robustness against optimization-based white-box attacks and query-based black-box attacks. In comparison with the other six adversarial training methods, the average improvement of classification accuracy is 19.93\%, 17.77\%, and 12.50\% on CIFAR10/100 and MNIST, respectively. In addition, for Square~\cite{Square}, the increase of the attack strength has little impact on the performance of our method and has a great impact on the other six adversarial training methods. Hence, the results demonstrate that our fast-FACM model can solve the problem of the low robustness of adversarial training methods against optimization-based white-box attacks and query-based black-box attacks.

\begin{table*}[t]
\caption{Evaluation (test accuracy(\%)) of different channel-wise activation suppressing methods on CIFAR10 using various attacks under the white-box setting. CL denotes Clean examples. F, P represent FGSM and PGD attacks, respectively. fFACM denotes fast-FACM. The attacks with gray background color represent the specified attacks.}
\renewcommand\arraystretch{1.0}
\centering
\footnotesize
\resizebox{1.0\textwidth}{!}{
\begin{tabular}{ccccccccccccc}
\hline
                                                                     &                               & \multicolumn{1}{c|}{}   & \multicolumn{7}{c|}{White-box attacks}                                                                                                       & \multicolumn{2}{c|}{\multirow{2}{*}{\begin{tabular}[c]{@{}c@{}}Black-box\\ attacks\end{tabular}}} & \multirow{2}{*}{\textbf{}} \\ \cline{1-10}
                                                                     &                               & \multicolumn{1}{c|}{}   & \multicolumn{4}{c|}{iterative-based}                                & \multicolumn{3}{c|}{optimization-based}                            & \multicolumn{2}{c|}{}                                                                             &                                \\ \cline{1-12}
                                                                     & Method                        & \multicolumn{1}{c|}{Cl} & F              & P              & MI             & \multicolumn{1}{c|}{AA} & \cellcolor{lightgray}DF$_{2}$       & \cellcolor{lightgray}CW$_{2}$       & \cellcolor{lightgray}CW$_{\infty}$ & \cellcolor{lightgray}Sq                                         & \cellcolor{lightgray}NA                              & \cellcolor{lightgray}\textbf{Avg.}                               \\ \hline
\multirow{2}{*}{\begin{tabular}[c]{@{}c@{}}WRN-\\ 16-4\end{tabular}} & TRADES+CAS~\cite{CAS}                   & \textbf{80.92}          & 51.36          & 46.92          & 49.22          & 42.7                    & 0.48           & 20.4           & 44.44                              & 65.2                                       & \textbf{80}                                          & 42.10                          \\
                                                                     & TRADES+fFACM-white($\tau$=1) & 79.42                   & \textbf{55.09} & \textbf{51.63} & \textbf{53.66} & \textbf{47.1}           & \textbf{33.66} & \textbf{27.96} & \textbf{61.78}                     & \textbf{72.6}                              & 76                                                   & \textbf{54.4}                  \\ \hline
\multirow{2}{*}{\begin{tabular}[c]{@{}c@{}}ResNet\\ 18\end{tabular}} & CIFS \cite{CIFS}                         & \textbf{82.46}          & \textbf{61.07} & \textbf{54.66} & \textbf{58.02} & -                       & 0.66           & \textbf{37.99} & 53.74                              & 39.8                                       & 66.5                                                 & 39.74                          \\
                                                                     & +fFACM-white($\tau$=3)   & 82.20                   & 59.48          & 53.98          & 56.51          & -                       & \textbf{9.59}  & 37.43          & \textbf{63.48}                     & \textbf{73.4}                              & \textbf{78}                                          & \textbf{52.38}                 \\ \hline
\end{tabular}
}
\label{tab6}
\end{table*}

\subsubsection{The Comparison between the fast-FACM model and the Channel-wise Activation Suppressing Methods}
\label{sec:comparison_fastFACM_and_CAS_CIFS}
To verify that the fast-FACM model can bring much more robust than the channel-wise activation suppressing methods on the adversarially trained classifier, we compare the classification accuracy of CAS~\cite{CAS} with the fast-FACM model against various attacks on CIFAR10. Besides, to demonstrate that the fast-FACM model can further increase the robustness of the channel-wise activation suppressing methods against optimization-based white-box attacks and query-based black-box attacks, we compare the classification accuracy of CIFS~\cite{CIFS} with and without the fast-FACM model on CIFAR10. As shown in Table~\ref{tab6}, in comparison with CAS, the fast-FACM model has higher robustness against various attacks except for NATTACK~\cite{NATTACK}. In comparison with CIFS, the fast-FACM model can significantly improve the classification accuracy against the specified attacks. The results demonstrate that the fast-FACM can increase the robustness of the channel-wise activation suppressing methods against optimization-based white-box attacks and query-based black-box attacks.

%To further verify that the adversarially trained classifier with the FACM-w/o-CMPD mode is more robust than the channel-wise activation suppressing methods against optimization-based white-box attacks and query-based black-box attacks, we compare the classification accuracy of CAS~\cite{CAS} with the FACM-w/o-CMPD mode, and CIFS~\cite{CIFS} with and without the FACM-w/o-CMPD mode on CIFAR10. As shown in Table~\ref{tab6}, \red{in comparison with CAS, the classification accuracy of our method is higher than that of CAS against various attacks except NATTACK with the slight decrease of performance on clean examples. The average classification accuracy against these attacks increases by 12.3\%. In comparison with CIFS, with the slight decrease of performance on clean examples and iterative-based white-box attacks, the FA model can significantly improve the classification accuracy against optimization-based white-box attacks and query-based black-box attacks. Specifically, the average reduction is slight, i.e. 1.01\%, and the average improvement is significant, i.e. 12.64\%.} \red{The results demonstrate that, .....}

\begin{figure*}[ht]
\centering
\centerline{\includegraphics[width=\textwidth]{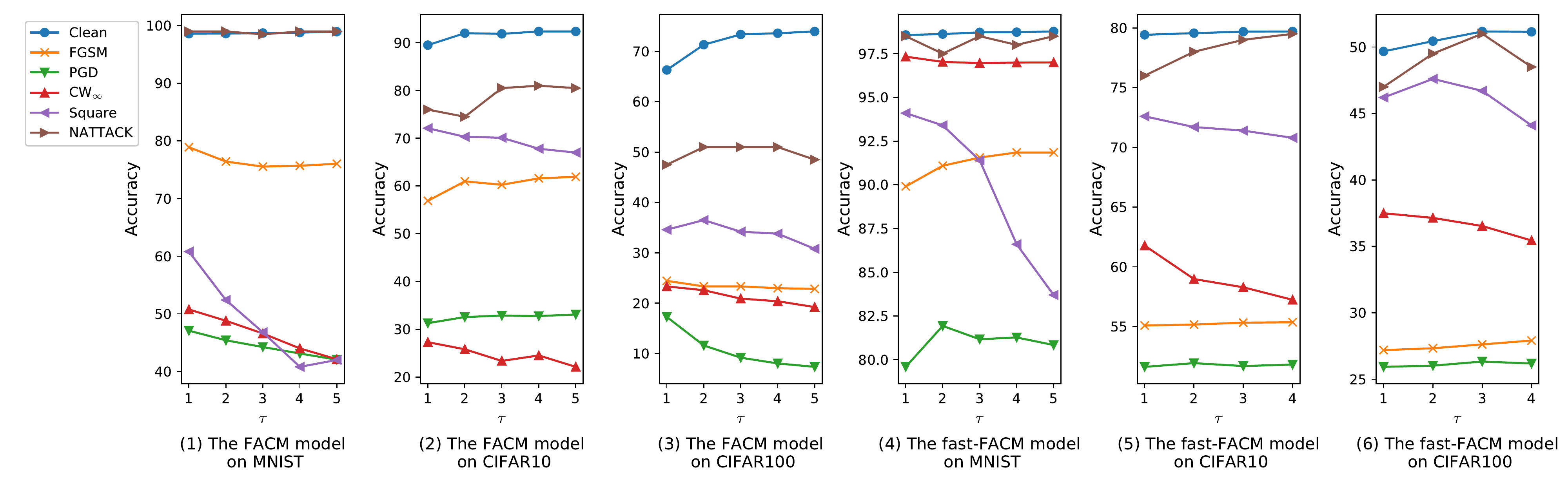}}
\caption{The influence of the size of the parameter $\tau$ in the FACM or fast-FACM model on the test accuracy(\%) against various attacks on MNIST, CIFAR10/100 datasets under the white-box setting. The FACM model is applied to the naturally trained classifier and the fast-FACM model to the TRADES~\cite{TRADES} trained classifier. The attack strength of FGSM and PGD is set as $2/255$ for attacking the naturally trained classifier with the FACM model.}
\label{FIG15}
\end{figure*}

\subsection{The Sensitivity Analysis of $\tau$}
\label{sec:sensitivity_analysis}
Fig.~\ref{FIG15} investigates the influence of the size of the parameter $\tau$ in the FACM and fast-FACM models on the test accuracy of MNIST and CIFAR10/100, respectively. As shown in Fig.~\ref{FIG15}(1)-(3), for the naturally trained classifier with the FACM model, as the parameter $\tau$ becomes large, the test accuracy on clean examples steadily increases, the test accuracy on optimization-based white-box attacks and query-based black-box attacks decreases gradually except for NATTACK increasing first and then decreasing on CIFAR10/100. Besides, the test accuracy on iterative-based white-box attacks increases on CIFAR10 but decreases on MNIST and CIFAR100. Through comprehensive consideration, we set $\tau$ in the FACM model as 3 on CIFAR10/100, and 1 on MNIST.

%As shown in Fig.~\ref{FIG1} and \ref{FIG2}, because the CMPD correction modules have less accuracy than the FA correction modules on clean examples, the increase of parameter $\tau$ can increase the probability to select io the FA correction modules, \red{which is why the accuracy on clean examples improves with the increase of the parameter $\tau$.}

As shown in Fig.~\ref{FIG15}(4)-(6), for the adversarially trained classifier with the fast-FACM model, as the parameter $\tau$ becomes large, the test accuracy on clean examples and iterative-based white-box attacks steadily increases. Besides, the test accuracy against optimization-based white-box attacks and query-based black-box attacks decreases on CIFAR10 and MNIST. The test accuracy against query-based black-box attacks increases first and then decreases on CIFAR100. Through comprehensive consideration, we set $\tau$ in the fast-FACM model as 1 on CIFAR10, 3 on CIFAR100, and 2 on MNIST.

%\red{Additionally, the larger the parameter $\tau$, the longer the inference time of the FACM and FACM-w/o-CMPD models as shown in Table~\ref{tab:inference_time}. Hence, we set $\tau$ in the FACM model as 3 on CIFAR10/100, 1 on MNIST, and set $\tau$ in the FA model as 1 on CIFAR10, 3 on CIFAR100 and 1 on MNIST.  here???}

\begin{table*}[t]
\caption{The attack time (mins) comparison between naturally trained classifier with and without the FACM model on MNIST, CIFAR10/100 under the white-box setting. The longer the attack time, the greater the amount of computation.}
\renewcommand\arraystretch{1.0}
\centering
\footnotesize
\resizebox{1.0\textwidth}{!}{
\begin{tabular}{cccccccccc}
\hline
              & \multicolumn{3}{c}{MNIST}                      & \multicolumn{3}{c}{CIFAR10}                    & \multicolumn{3}{c}{CIFAR100}                   \\ \hline
Method        & FGSM          & PGD40         & CW$_{\infty}$         & FGSM          & PGD20         & CW$_{\infty}$         & FGSM          & PGD20         & CW$_{\infty}$         \\ \hline
Naturally trained classifier & 0.03          & 0.40          & 2.83           & 0.07          & 0.74          & 9.4            & 0.17          & 2.16          & 25.71          \\
+ FACM      & \textbf{0.21} & \textbf{4.89} & \textbf{33.35} & \textbf{0.35} & \textbf{4.89} & \textbf{71.25} & \textbf{0.29} & \textbf{4.67} & \textbf{70.15} \\ \hline
\end{tabular}
}
\label{tab8}
\end{table*}

\begin{table}[t]
\caption{The inference time (mins) comparison between the classifier with and without the FACM or FA model on MNIST, CIFAR10 and CIFAR100.}
\renewcommand\arraystretch{1.0}
\centering
\footnotesize
\begin{tabular}{cccc}
\hline
Method                & CIFAR10 & CIFAR100 & MNIST \\ \hline
TRADES                & 0.051   & 0.050    & 0.028 \\
+fast-FACM($\tau$=2)   & 0.066   & 0.068    & 0.036 \\
+FACM($\tau$=3) & 0.170   & 0.156    & 0.052 \\ \hline
\end{tabular}
\label{tab:inference_time}
\end{table}

\subsection{The Attack Time and Inference Time Comparisons}
\label{sec:build_time_attack_time_inference_time}
The naturally trained classifier with the FACM model can be regarded as a correction model set, which consists of the naturally trained classifier itself and $n-1$ number of FA correction modules and $n$ number of CMPD correction modules. Under the white-box setting, the target of the white-box attacks is an ensemble model, thereby requiring more attack time. As shown in Table~\ref{tab8}, the attack time of the naturally trained classifier with our FACM model is 7-12 times longer on MNIST, 5-8 times on CIFAR10, and 1.7-2.7 times on CIFAR100. Therefore, the computation complexity of the white-box attacks becomes large when these attacks generate adversarial examples against the classifier without the FACM model.

As shown in Table~\ref{tab:inference_time}, in comparison with the original classifier, the inference time of the classifier with the fast-FACM model has a slight increase. The inference time of the classifier with the FACM model is 2-3 times that of the original classifier. Although the inference time of the classifier with the FACM or fast-FACM model increases, it is worth that the performance of both the naturally trained classifier with the FACM model and the adversarially trained classifier with the fast-FACM model is significantly improved. 

\section{Related Work}
\label{sec:related_work}
\textbf{Adversarial Nature:} Szegedy et al.~\cite{linear-property} first proposed the linear property of DNN to explain the fragility of deep learning models against adversarial examples. However, Ding et al.~\cite{Random-forest} found that the negative effect of the added imperceptible noise will be amplified layer by layer till the output of the classifier makes a wrong decision. Inspired by this, we believe the intermediate layer can retain effective features for the real category.

\textbf{Adversarial Training}: Madry et al.~\cite{PGD} proposed the vanilla adversarial training (AT) method. Then, different variants of adversarial training methods were proposed. For example, The Adversarial Training with Hypersphere Embedding (ATHE)~\cite{ATHE} advocated incorporating the hypersphere mechanism into the AT procedure by regularizing the features onto compact manifolds. The Fast Adversarial Training (Fast-AT)~\cite{Fast-AT} trained empirically robust models using a much weaker and cheaper adversary. The Friendly Adversarial Training (FAT)~\cite{FAT} employed confident adversarial data for updating the current model. The Adversarial Training with Transferable Adversarial examples (ATTA)~\cite{ATTA} shows that there is high transferability between models from neighboring epochs in the same training process, which can enhance the robustness of trained models and greatly improve the training efficiency by accumulating adversarial perturbations through epochs. The You Only Propagate Once (YOPO)~\cite{YOPO} reduced the total number of full forward and backward propagation to only one for each group of adversary updates. The Fair-Robust-Learning (FRL)~\cite{FRL} mitigated the unfairness problem that the accuracy of some categories is much lower than the average accuracy of the DNN model. The TRadeoff-inspired Adversarial DEfense via Surrogate-loss minimization (TRADES)~\cite{TRADES} identified a trade-off between robustness and accuracy that serves as a guiding principle in the design of defenses against adversarial examples.

In addition, Sriramanan et al. \cite{GAMA} introduced a relaxation term to the standard loss, which finds more accurate gradient directions to increase attack efficacy and achieve more efficient adversarial training. Wang et al. \cite{OAT} proposed Once-for-all adversarial training methods with a controlling hyper-parameter as the input in which the trained model could be adjusted among different standards and robust accuracies at testing time. Stutz et al. \cite{CCAT} tackled the problem that the robustness generalization on the unseen threat model by biasing the model towards low confidence predictions on adversarial examples. Laidlaw et al. \ cite {Perceptual-adversarial-training} developed perceptual adversarial training against all imperceptible attacks. Pang et al. \cite{Bag-of-tricks-for-AT} provided comprehensive evaluations on CIFAR10, which investigate the effects of mostly overlooked training tricks and hyperparameters for adversarially trained DNN models. Cui et al.~\cite{Learnable-boundary} used a clean model to guide the adversarial training of the robust model, which inherits the boundary of the clean model. Misclassification-aware adversarial training~\cite{MART} differentiated the misclassified examples and correctly classified examples.

Recently, Bai et al. \cite{CAS} and Yan et al. \cite{CIFS} investigated the adversarial robustness of DNNs from the perspective of channel-wise activations. They found adversarial training~\cite{PGD} can align the activation magnitudes of adversarial examples with those of their natural counterparts, but the over-activation of adversarial examples still exists. To further improve the robustness of DNNs, Bai et al.~\cite{CAS} proposed Channel-wise Activation Suppressing (CAS) to suppress redundant activation of adversarial examples, and Yan et al.~\cite{CIFS} proposed Channel-wise Importance-based Feature Selection (CIFS) to suppress the channels that are negatively relevant to predictions. However, these methods also remain to achieve poor performance against both optimization-based white-box attacks and query-based black-box attacks.

\textbf{Adversarial Training with External Data.} External data was verified to reduce the gap between the robust and natural accuracies~\cite{robust-vision-transformers,defending-against-image-corruptions,in-out-distribution,exploring-architectural-ingredients,reducing-excessive-margin,stable-neural-ODE,revisiting-residual-networks,uncovering-the-limits,fixing-data-augmentation}. However, data augmentation, including image-to-image learning generators~\cite{many-faces-of-robustness,generated-data,proxy-distributions,better-diffusion-models,AdversarialAugment} and other input transformation methods~\cite{PRIME,mixup,common-data-augmentation,AugMix,RandAugment,NoisyMix,DAJAT}, was an effective technique to generate more data to avoid the robust overfitting without external data. Additionally, network architecture~\cite{RobustResBlock,stable-neural-ODE,exploring-architectural-ingredients,a-winning-hand} was explored to find the relationship with the robustness against adversarial examples. Finally, three robust benchmark~\cite{RobustBench,RobustArt,benchmarking-and-rethinking} were conducted to compare the performance of each defense model.

\textbf{Randomization:} These~\cite{SAP,Mitigating-adversarial-effects,RS,SLQ,Adversarial-noise-layer,Random-self-ensemble,Randomized-diversification,SORS} are a class of methods to defend against adversarial examples and keep the accuracy of the classifier on clean samples. For example, random image resizing and padding (RP)~\cite{Mitigating-adversarial-effects} and stochastic local quantization (SLQ)~\cite{SLQ} defend against adversarial examples by adding a randomization layer before the input to the classifier. However, Athalye~\cite{Obfuscated-gradients} claimed that Although RP~\cite{Mitigating-adversarial-effects} and SLQ~\cite{SLQ} can be broken through expectation over transformation, these methods still have the ability to defend against optimization-based white-box attacks~\cite{CW,DeepFool} and query-based black-box attacks~\cite{Square,NATTACK} relative to iterative-based gradient attacks~\cite{PGD}.

\section{Conclusion}
\label{sec:conclusion}
In this paper, the Feature Analysis and Conditional Matching prediction distribution (FACM) model is proposed to improve the robustness of the DNN-implemented classifier against optimization-based white-box attacks and query-based black-box attacks. Specifically, the \textit{Correction Property} is proposed and proved that the features retained in the intermediate layers of the DNN can be used to correct the classification of the classifier on adversarial examples. Based on the \textit{Correction Property}, the FA and CMPD modules are proposed to be collaboratively implemented in our model to enhance the robustness of the DNN classifier. Then, the \textit{Diversity Property} is proved to exist in the correction modules. A decision module is proposed in our model to further enhance the robustness of the DNN classifier by utilizing the diversity among the correction modules. The experimental results show that our FACM model can effectively improve the robustness of the naturally trained classifier against various attacks, especially optimization-based white-box attacks, and query-based black-box attacks. Our fast-FACM can improve the test accuracy of the adversarially trained classifier against optimization-based white-box attacks and query-based black-box attacks on basis of keeping the performance on clean examples and other attacks. In addition, our model can be well combined with randomization methods. And the classifier with our model is equal to an ensemble model with the size of $2n$, thereby increasing the computational complexity of white-box attacks under the white-box setting.

%\printcredits

%% Loading bibliography style file
%\bibliographystyle{model1-num-names}
\bibliographystyle{cas-model2-names}

% Loading bibliography database
\bibliography{ref}

\end{document}